\def\1{\bm{1}}
\def\vzero{{\bm{0}}}
\def\vone{{\bm{1}}}
\def\vb{{\bm{b}}}
\def\ve{{\bm{e}}}
\def\vu{{\bm{u}}}
\def\vv{{\bm{v}}}
\def\mA{{\bm{A}}}
\def\mC{{\bm{C}}}
\def\mE{{\bm{E}}}
\def\mI{{\bm{I}}}
\def\mL{{\bm{L}}}
\def\mP{{\bm{P}}}
\def\mW{{\bm{W}}}
\def\mX{{\bm{X}}}
\def\mY{{\bm{Y}}}
\def\mZ{{\bm{Z}}}
\DeclareMathAlphabet{\mathsfit}{\encodingdefault}{\sfdefault}{m}{sl}
\SetMathAlphabet{\mathsfit}{bold}{\encodingdefault}{\sfdefault}{bx}{n}
\def\sG{{\mathbb{G}}}
\def\sL{{\mathbb{L}}}
\def\sS{{\mathbb{S}}}
\def\emA{{A}}
\def\emL{{L}}
\def\emZ{{Z}}
\DeclareMathOperator*{\argmax}{arg\,max}
\DeclareMathOperator*{\argmin}{arg\,min}
\definecolor{darkblue}{rgb}{0,0.0,0.55}
\newcommand{\sfmx}{\sigma}
\newcommand{\hdmx}{\sigma_{\rm H}}
\newcommand{\relu}{{\rm ReLU}}
\newcommand{\funcdist}{\mathsf{d}}
\newcommand{\attn}{{\rm Attn}}
\newcommand{\ff}{{\rm FF}}
\renewcommand{\indic}[1]{\mathbbm{1}\left\{#1\right\}}
\newcommand{\BB}{\text{BERT}_{\text{BASE}}}
\title{Are Transformers universal approximators of sequence-to-sequence functions?}
\author{Chulhee Yun\thanks{ Based on work performed at Google Research New York }\\
MIT\\
\texttt{chulheey@mit.edu} \\
\And
Srinadh Bhojanapalli\\
Google Research NY\\
\texttt{bsrinadh@google.com}\\
\And
Ankit Singh Rawat\\
Google Research NY\\
\texttt{ankitsrawat@google.com}\\
\And
Sashank J. Reddi\\
Google Research NY\\
\texttt{sashank@google.com}\\
\And
Sanjiv Kumar\\
Google Research NY\\
\texttt{sanjivk@google.com}
}
\begin{document}

\maketitle
\begin{abstract}
Despite the widespread adoption of Transformer models for NLP tasks, the expressive power of these models is not well-understood. In this paper, we establish that Transformer models are universal approximators of continuous {\em permutation equivariant} sequence-to-sequence functions with compact support, which is quite surprising given the amount of shared parameters in these models. Furthermore, using positional encodings, we circumvent the restriction of permutation equivariance, and show that Transformer models can universally approximate {\em arbitrary} continuous sequence-to-sequence functions on a compact domain. Interestingly, our proof techniques clearly highlight the different roles of the self-attention and the feed-forward layers in Transformers. In particular, we prove that fixed width self-attention layers can compute \emph{contextual mappings} of the input sequences, playing a key role in the universal approximation property of Transformers. Based on this insight from our analysis, we consider other simpler alternatives to self-attention layers and empirically evaluate them.
\end{abstract}

\vspace*{-5pt}
\section{Introduction}
\vspace*{-5pt}

Self-attention based Transformer networks \citep{vaswani2017attention} have been at the center of the recent progress on various natural language processing (NLP) tasks, including machine translation \citep{vaswani2017attention}, language modeling \citep{radford2018gpt,radford2019gpt2}, and question answering \citep{devlin2018bert, xlnet2019, roberta2019}. 
All these tasks involve learning models that map an input sequence of tokens to an output sequence of tokens. Transformers make it feasible to train large models to approximate these sequence-to-sequence functions due to their ability to process the input tokens in a parallel way, as opposed to the sequential 
nature of RNNs and LSTMs.


A Transformer block consists of two kinds of layers: a self-attention layer and a token-wise feed-forward layer, with skip connections present in both layers. The self-attention layer transforms each input token embedding using a weighted combination of the embeddings of all tokens in the input sequence,
where weights are generated by pairwise dot-products among the input token embeddings. The token-wise feed-forward layer then independently processes each of these modified input token embeddings without any interaction among them. 
Notably, Transformers employ parameter reuse across tokens, as both layers use the same parameters to process each token.
Moreover, Transformers have to rely solely on the pairwise dot-products to capture interaction between the input tokens.

Given the parameter sharing and limited interactions between tokens, it is natural to wonder: what class of sequence-to-sequence functions can the Transformer networks represent? Also, what is the role of the two different kinds of layers? Are both layers needed to obtain the representation power of Transformers? 
In the existing literature, the advantage of Transformers has often been attributed to their capability of computing \textit{contextual} embeddings/mappings of the input, as opposed to fixed word embeddings as in word2vec \citep{mikolov2013distributed}. Is it possible to formalize the notion of contextual mappings? If yes,
can Transformers actually compute such mappings? Such questions still remain elusive.

In this paper, we provide a mathematical definition of contextual mappings and show that multi-head self-attention layers can indeed compute contextual mappings of the input sequences. We further show that this ability to compute contextual mappings coupled with the value mapping ability of the feed-forward layers makes Transformers universal approximators of any permutation equivariant sequence-to-sequence function. We also improve this result using positional encodings, and show that Transformers can represent any sequence-to-sequence function; i.e., the restriction of permutation equivariance can be removed by positional encodings.


These results on universal approximation of sequence-to-sequence functions raise a natural question:
is it possible to have a more efficient architecture to compute contextual mappings, consequently, preserving the ability to universally approximate sequence-to-sequence functions? Towards this, we explore other architectures that can implement contextual mappings (to some extent), and experimentally evaluate their performance. In our experiments, we notice that the models that combine these simpler architectures with Transformers have better performance, compared to the standalone Transformers.
We conclude the paper by presenting more discussion and interesting future research directions along these lines.


\vspace*{-5pt}
\subsection{Summary of our contributions}
\vspace*{-5pt}
\begin{list}{\textbullet}{\leftmargin=1.1em \itemindent=0em \itemsep=1pt}
    \item We prove that Transformers are universal approximators of continuous and permutation equivariant sequence-to-sequence functions with compact support (Theorem~\ref{thm:univapprox}). We also show that, if Transformers have trainable positional encodings added to the input, then they are universal approximators of continuous sequence-to-sequence functions on a compact domain (Theorem~\ref{thm:anyseq2seq}). 
    \item We formalize the notion of \emph{contextual mappings} and show that the attention layers can compute contextual mappings, where each unique context is mapped to a unique vector (Lemma~\ref{lemma:contextmap}). 
    \item We experimentally evaluate other simpler layers that can compute contextual mappings to some extent, such as bi-linear projections and separable convolutions, and show that substituting some of the self-attention layers with these layers can result in better performance (Section~\ref{sec:discussion}). 
\end{list}

\vspace*{-5pt}
\subsection{Related works \& notation}
\vspace*{-3pt}
\noindent{\bf Analysis of attention-based models.}~
Given the popularity of Transformers, there have been numerous works trying to understand the role of attention layers in natural language processing models. One such line of work focuses on probing the output of attention layers to understand the attention mechanism and internal language representation \citep{hewitt2019structural, clark2019does, coenen2019visualizing, vig2019analyzing}. Although these results give valuable insights, a consistent theoretical analysis corroborating these findings is missing. 

\noindent{\bf Universal approximation theorems.}~Universal approximation theorems are classical results in neural network theory, dating back many decades \citep{cybenko1989approximation, hornik1991approximation}. These results show that given unbounded width, a one-hidden-layer neural network can approximate arbitrary continuous function with compact support, up to any accuracy. 
Other results focusing on depth appeared more recently \citep{lu2017expressive, hanin2017approximating, lin2018resnet}. In particular, \citet{lu2017expressive, hanin2017approximating} consider fully-connected ReLU networks whose input dimension is $d$, and show that networks with width $d+1$ and unbounded depth are universal approximators of scalar-valued continuous functions. \citet{lin2018resnet} show that a residual network with one hidden neuron per residual block is a universal approximator of scalar-valued functions, given unbounded depth.
Although Transformer networks do have residual connections, due to their heavy parameter sharing, the existing analyses for residual networks do not extend to Transformers.
\citet{sannai2019universal} consider universally approximating permutation invariant/equivariant functions using fully-connected ReLU networks.

\noindent{\bf Turing completeness results on Transformers.}~Recently, \citet{perez2019turing} have shown that Transformers with infinite precision are Turing complete, which is not the case in finite precision setting~\citep{dehghani2018universal}. We note that Turing completeness deals with computation on formal languages (thus discrete objects), while universal approximation focuses on functions on a continuum. In other words, these are two different concepts; and one does not imply another.

\noindent{\bf Notation.}~We use the following notation in the paper. Given a matrix $\mA$, let $\emA_{i,j}$, $\mA_{i,:}$, and $\mA_{:, j}$ denote its $(i,j)$-th entry, $i$-th row, and $j$-th column, respectively. We use $\norm{\mA}_p$ to denote the entry-wise $\ell^p$ norm of $\mA$. Let $\sfmx[\cdot]$ be the softmax operator, which takes a matrix as input and applies softmax operation to each column of the matrix, which results in a column stochastic matrix, i.e., a matrix that has non-negative entries with each column summing to 1. We similarly define $\hdmx[\cdot]$ to be the hardmax operator, which outputs the one-hot representation of the $\argmax$ entry for each column of the input matrix. If there are $k$ $\argmax$ entries, then the output is $1/k$ for such entries.
We use $\vone_n$ to denote a vector of length $n$ whose entries are all $1$. We denote the 0-1 indicator function by $\indic{\cdot}$. We use $d$ and $n$ to denote the embedding dimension and the sequence length, respectively. We assume throughout that $n \geq 2$, as the Transformers reduce to residual networks when $n=1$.

\vspace*{-5pt}
\section{Transformer networks}\label{sec:transformerblocks}
\vspace*{-5pt}

A Transformer block is a sequence-to-sequence function mapping $\reals^{d \times n}$ to $\reals^{d \times n}$. It consists of two layers: a self-attention layer and a token-wise feed-forward layer, with both layers having a skip connection. More concretely, for an input $\mX \in \reals^{d \times n}$ consisting of $d$-dimensional embeddings of $n$ tokens, a Transformer block with {\em multiplicative} or {\em dot-product} attention~\citep{luong2015multiplicativ} consists of the following two layers\footnote{In our proof we use bias vectors $\vb_Q^i$ for query projections in attention layers. We omit them here for brevity.}:
\begin{align}
    \label{eq:attn}
    \attn(\mX) &= \mX + \sum\nolimits_{i=1}^h \mW_O^i \mW_V^i \mX \cdot \sfmx [(\mW_K^i \mX)^T \mW_Q^i \mX],\\
    \label{eq:ff}
    \ff(\mX) &= \attn(\mX) + \mW_2 \cdot \relu (\mW_1 \cdot \attn(\mX) + \vb_1 \vone_n^T) + \vb_2 \vone_n^T,
\end{align}
where $\mW_O^i \in \reals^{d \times m}$, $\mW_V^i, \mW_K^i, \mW_Q^i \in \reals^{m \times d}$, $\mW_2 \in \reals^{d \times r}, \mW_1 \in \reals^{r \times d}, \vb_2 \in \reals^{d}, \vb_1 \in \reals^{r}$, and $\ff(\mX)$ is the output of the Transformer block. The number of heads $h$ and the head size $m$ are two main parameters of the attention layer; and $r$ denotes the hidden layer size of the feed-forward layer.

Here, we would like to point out that our definition of the self-attention layer \eqref{eq:attn} is an equivalent reformulation of \citep{vaswani2017attention}, where they concatenate attention heads and multiply a matrix $\mW_O \in \reals^{d \times mh}$ to the concatenation.
One difference in our setup is the absence of layer normalization, which simplies our analysis while preserving the basic architecture of the Transformer.


We define the Transformer networks as the composition of Transformer blocks. The family of the sequence-to-sequence functions corresponding to the Transformers can be defined as:
\begin{align*}
    \mc T^{h,m,r} 
    &\defeq 
    \{ g: \reals^{d \times n} \to \reals^{d \times n} \mid 
    \text{$g$ is a composition of Transformer blocks $t^{h,m,r}$'s} \}.
\end{align*}
where $t^{h,m,r}:\reals^{d \times n} \to \reals^{d \times n}$ denotes a Transformer block defined by an attention layer with $h$ heads of size $m$ each, and a feed-forward layer with $r$ hidden nodes. 

We say that a function $f : \reals^{d \times n} \to \reals^{d \times n}$ is {\em permutation equivariant} if for any permutation matrix $\mP$, we have $f(\mX \mP) = f(\mX) \mP$; i.e., if we permute the columns of $\mX$, then the columns of $f(\mX)$ are permuted in the same way. A Transformer block is permutation equivariant, which we formally prove in Section~\ref{sec:proof_equivariance}. This consequently establishes the permutation equivariance of the class $\mc T^{h,m,r}$. 
\begin{claim}
\label{claim:equivariance}
A Transformer block $t^{h, m, r}$ defines a permutation equivariant map from $\reals^{d \times n}$ to $\reals^{d \times n}$.
\end{claim}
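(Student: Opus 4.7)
The plan is to prove permutation equivariance for the two constituent layers separately and then compose. Writing $t^{h,m,r} = \ff \circ \attn$, it suffices to show $\attn(\mX\mP) = \attn(\mX)\mP$ and that $\mY \mapsto \mY + \mW_2 \relu(\mW_1 \mY + \vb_1 \vone_n^T) + \vb_2 \vone_n^T$ (i.e., the token-wise part of the feed-forward block) commutes with right-multiplication by any permutation matrix $\mP$.

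First I would establish a short lemma on the softmax operator: since $\sfmx[\cdot]$ applies softmax column-wise, for any matrix $\mA \in \reals^{n \times n}$ and any permutation matrix $\mP$ we have $\sfmx[\mA \mP] = \sfmx[\mA] \mP$ (right multiplication just permutes columns, which is compatible with column-wise softmax) and $\sfmx[\mP^T \mA] = \mP^T \sfmx[\mA]$ (permuting rows within each column reorders the inputs to softmax but then reorders the outputs identically). Combining these gives $\sfmx[\mP^T \mA \mP] = \mP^T \sfmx[\mA] \mP$.

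Applying this to the attention layer, the key computation is
\begin{align*}
(\mW_K^i \mX \mP)^T \mW_Q^i \mX \mP = \mP^T (\mW_K^i \mX)^T \mW_Q^i \mX \mP,
\end{align*}
so the softmax lemma yields $\sfmx[(\mW_K^i \mX \mP)^T \mW_Q^i \mX \mP] = \mP^T \sfmx[(\mW_K^i \mX)^T \mW_Q^i \mX] \mP$. Left-multiplying by $\mW_O^i \mW_V^i \mX \mP$ collapses the inner $\mP\mP^T = \mI$, and one concludes $\attn(\mX \mP) = \attn(\mX) \mP$ after recognizing the leftover skip term $\mX\mP$.

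For the feed-forward layer, I would use the facts that (i) $\relu$ acts entry-wise, so $\relu(\mA \mP) = \relu(\mA) \mP$, and (ii) the bias terms are invariant on the right by $\mP$ because $\vone_n^T \mP = \vone_n^T$ (each column of $\mP$ sums to one). Hence $\mW_1 (\attn(\mX)\mP) + \vb_1 \vone_n^T = (\mW_1 \attn(\mX) + \vb_1 \vone_n^T)\mP$, and pushing $\mP$ through $\relu$ and the remaining linear map gives $\ff(\mX \mP) = \ff(\mX) \mP$. Composing with the already-proven attention identity gives the claim. I do not anticipate any real obstacle here; the only subtle point worth stating carefully is the softmax identity, since the proof of the attention equivariance hinges on the $\mP^T \mA \mP$ sandwich being preserved by column-wise softmax.
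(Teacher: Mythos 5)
Your proposal is correct and follows essentially the same route as the paper's proof: the same $\mP^T (\mW_K^i\mX)^T\mW_Q^i\mX\,\mP$ computation, the same column-wise softmax equivariance identity, cancellation via $\mP\mP^T = \mI$, and entry-wise commutation of $\relu$ with right-multiplication by $\mP$ for the feed-forward part. The only cosmetic difference is that you invoke $\vone_n^T\mP = \vone_n^T$ for the bias terms while the paper simply factors $\mP$ out on the right; both are valid.
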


As seen in above, both layers (cf.~\eqref{eq:attn} and \eqref{eq:ff}) of a Transformer block employ parameter reuse/sharing, because each token/column undergoes the same transformations (e.g., $\mW_Q^i$, $\mW_K^i$, or $\mW_1$) regardless of its position.
Moreover, interactions between tokens can only be captured through pairwise dot-products in the softmax operator $\sfmx[\cdot]$~(cf.~\eqref{eq:attn}).
Given such limitations in a single Transformer block's representation power, it is not obvious what kinds of sequence-to-sequence functions $\mc T^{h,m,r}$ can approximate; we provide the answer to this question in the next section.



\vspace*{-5pt}
\section{Transformers are universal approximators of sequence-to-sequence functions}
\vspace*{-4pt}
\label{sec:main}


In this section, we present our theorems showing that the Transformer networks are universal approximators of sequence-to-sequence functions. Let us start by defining the target function class $\mc F_{\rm PE}$, which consists of all continuous permutation equivariant functions with compact support that map $\reals^{d \times n}$ to $\reals^{d \times n}$. Here, continuity is defined with respect to any entry-wise $\ell^p$ norm, $1 \leq p < \infty$. Given two functions $f_1, f_2 : \reals^{d\times n} \to \reals^{d \times n}$, for $1 \leq p < \infty$, we define a distance between them as
$$
\funcdist_p(f_1,f_2) \defeq \Big (\int \norm{f_1(\mX) - f_2(\mX)}_p^p d\mX \Big )^{1/p}.
$$
The following result shows that a Transformer network with a constant number of heads $h$, head size $m$, and hidden layer of size $r$ can approximate any function in $\mc F_{\rm PE}$.

\begin{theorem}
\label{thm:univapprox}
Let $1 \leq p < \infty$ and $\epsilon > 0$, then for any given $f \in \mc F_{\rm PE}$, there exists a Transformer network $g \in \mc T^{2,1,4}$, such that $\funcdist_p(f,g) \leq \epsilon$. 
\end{theorem}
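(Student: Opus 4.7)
The plan is to follow a three-step approximation pipeline, which is standard for this flavor of universal approximation result.

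\textbf{Step 1 (Quantization of the target function).} Since $f\in\mc F_{\rm PE}$ is continuous with compact support, uniform continuity lets us approximate $f$ in $\funcdist_p$ by a piecewise-constant permutation-equivariant function $\bar f$ supported on a fine grid $\mathbb G_\delta\subset[0,1]^{d\times n}$ with resolution $\delta$. Concretely, on each cell $\mX+[0,\delta)^{d\times n}$ with $\mX\in\mathbb G_\delta$, set $\bar f$ to a fixed representative value chosen equivariantly. Choosing $\delta$ small, $\funcdist_p(f,\bar f)\le\eps/3$.

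\textbf{Step 2 (Realizing $\bar f$ by a modified Transformer $\bar g$).} Introduce the class $\bar{\mc T}^{h,m,r}$ in which softmax is replaced by hardmax $\hdmx$ and the feed-forward activations are allowed to be any piecewise-linear scalar function applied entrywise (a standard intermediate object). I would build $\bar g$ in three stages:
\begin{list}{\textbullet}{\leftmargin=1.1em}
\item A stack of token-wise feed-forward layers quantize each column of $\mX$ to the nearest grid point of $\mathbb G_\delta$. This uses the $\relu$ + skip connection structure of \eqref{eq:ff} to implement a coordinate-wise staircase.
\item A stack of hardmax self-attention layers computes a \emph{contextual mapping}: they assign to each quantized input $\mX\in\mathbb G_\delta$ a matrix $q(\mX)\in\reals^{d\times n}$ whose columns are all distinct across the orbit of $\mX$ under permutations, but which is itself permutation equivariant, so that the multiset of columns uniquely identifies $\mX$ up to permutation. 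This is exactly what Lemma~\ref{lemma:contextmap} provides.
\item A final stack of token-wise feed-forward layers acts as a lookup table: for each column value produced in the previous stage, output the corresponding column of $\bar f(\mX)$. Permutation equivariance of $\bar f$ ensures this lookup is well-defined as a token-wise map.
\end{list}
A careful bookkeeping of the errors incurred at each stage yields $\funcdist_p(\bar f,\bar g)\le\eps/3$.

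\textbf{Step 3 (From $\bar g\in\bar{\mc T}^{2,1,4}$ to $g\in\mc T^{2,1,4}$).} Replace every hardmax by a softmax and every piecewise-linear activation by an appropriate $\relu$-realization with skip connections. Softmax with query/key weights scaled by a large temperature $\tau$ converges uniformly on any bounded set to hardmax on inputs whose top entries are separated by a margin; the quantization in Step~2 guarantees this separation. Picking $\tau$ sufficiently large (depending on the depth and on the Lipschitz constants of the subsequent layers) makes $\funcdist_p(\bar g,g)\le\eps/3$, and the triangle inequality closes the argument.

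\textbf{Expected obstacle.} The hard part is Step~2, and within it, the contextual-mapping construction with the extremely frugal budget $h=2$, $m=1$, $r=4$. Each attention head of size $1$ is essentially a scalar-valued competitive read, so one must iteratively compose many such heads to inject enough ``information about the other tokens'' into each column that the full multiset becomes recoverable, while never breaking permutation equivariance. The delicate point is guaranteeing injectivity of the contextual map across the exponentially many grid configurations: one typically arranges the quantized coordinates on very different scales and uses the attention layer to implement a selective sum/max over columns, so that the resulting value at each position encodes a unique ``address'' for the column together with a fingerprint of the surrounding context. Coupling this combinatorial construction with the softmax-to-hardmax temperature argument in Step~3, where errors can compound across depth, is the main technical burden.
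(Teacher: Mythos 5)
Your proposal follows essentially the same route as the paper: the identical three-step pipeline (piecewise-constant approximation, realization by a modified hardmax/piecewise-linear Transformer, then softmax/ReLU approximation of the modified network), with the same internal decomposition of Step 2 into feed-forward quantization, a self-attention contextual mapping, and a feed-forward lookup, and you correctly single out the contextual-mapping lemma as the technical core. The only cosmetic difference is that the paper takes the modified class to be $\overline{\mc T}^{2,1,1}$ and only pays the budget $r=4$ when converting each three-piece activation into four ReLUs in Step 3.
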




Next, we present our theorem on Transformers with \emph{positional encodings}. In order to endow the Transformer networks with the ability to capture the information about the position of tokens in the input sequence, it is a common practice to add positional encodings $\mE \in \reals^{d \times n}$ to the input sequence before feeding it to the Transformer network~ \citep{vaswani2017attention, devlin2018bert}. Consider the functions represented by Transformers with positional encodings: 
\begin{equation*}
    \mc T^{h, m, r}_{\rm P} \defeq \{ g_{\rm P}(\mX) = g(\mX+\mE) \mid g \in \mc T^{h, m, r} \text{ and } \mE \in \reals^{d \times n} \}.
\end{equation*}

Here we show that if $\mE$ is trainable, these positional encodings are sufficient to remove the permutation equivariance restriction of the Transformers. Towards this, we define $\mc F_{\rm CD}$ to be the set of all continuous functions that map a compact domain in $\reals^{d \times n}$ to $\reals^{d\times n}$. Note that $\mc F_{\rm CD}$ does not have the restriction of permutation equivariance as in $\mc F_{\rm PE}$, but any $f \in \mc F_{\rm CD}$ is defined on a compact domain instead of the whole $\reals^{d \times n}$. The following result states that, equipped with the trainable positional encodings, Transformers can approximate any sequence-to-sequence function in $\mc F_{\rm CD}$. 
 
\begin{theorem}
\label{thm:anyseq2seq}
Let $1 \leq p < \infty$ and $\epsilon > 0$, then for any given $f \in \mc F_{\rm CD}$, there exists a Transformer network $g \in \mc T^{2,1,4}_{\rm P}$ such that we have $\funcdist_p(f,g) \leq \epsilon$. 
\end{theorem}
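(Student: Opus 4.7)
\textbf{Proof proposal for Theorem~\ref{thm:anyseq2seq}.} The plan is to reduce to Theorem~\ref{thm:univapprox} by using the trainable positional encoding $\mE$ to embed the compact domain of $f$ into a region of $\reals^{d \times n}$ where the columns are distinguishable by position, so that on this region an extension of $f$ can be made permutation equivariant. Let $K \subset \reals^{d \times n}$ be the compact domain of $f$, and choose $R > 0$ large enough that the columns of every $\mX \in K$ lie in $[-R, R]^d$. I would pick $\mE$ with columns $\mE_{:,i}$ lying in disjoint ``boxes'', e.g.\ set $\mE_{:,i} = (4R\cdot i)\cdot \vone_d$, so that for every $\mX \in K$ the $i$-th column of $\mY := \mX + \mE$ lies in a hypercube $B_i \subset \reals^d$ and the $B_i$'s are pairwise disjoint. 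This makes the column-index of each entry of $\mY$ unambiguously readable from its values: for any column permutation $\mP$ of $\mY$, we can uniquely recover both $\mX$ and $\mP$.

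Next I would construct an extension $\tilde f : \reals^{d\times n} \to \reals^{d\times n}$ that is continuous, permutation equivariant, compactly supported, and satisfies $\tilde f(\mX + \mE) = f(\mX)$ for all $\mX \in K$. On the set $S := \bigcup_{\mP} (K + \mE)\mP$ (union over all column permutations) define
\[
\tilde f\bigl((\mX+\mE)\mP\bigr) := f(\mX)\mP, \qquad \mX \in K,\ \mP \text{ a permutation matrix}.
\]
This is well-defined because the disjoint-box construction makes the pair $(\mX,\mP)$ uniquely determined by $(\mX+\mE)\mP$, and it is manifestly permutation equivariant on $S$. Continuity on $S$ follows from continuity of $f$ together with the fact that the permuted copies $(K+\mE)\mP$ are pairwise separated (their columns occupy disjoint boxes up to reindexing). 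Finally, extend $\tilde f$ from the closed set $S$ to all of $\reals^{d\times n}$ while preserving continuity, compact support, and permutation equivariance---this can be done by applying Tietze extension entry-wise on a fundamental domain of the permutation action (e.g.\ the ``sorted'' region) and then propagating the extension equivariantly across the $n!$ permuted copies, smoothly cutting it off outside a large compact set.

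Having produced $\tilde f \in \mc F_{\rm PE}$, I would invoke Theorem~\ref{thm:univapprox} to obtain $g \in \mc T^{2,1,4}$ with $\funcdist_p(\tilde f, g) \leq \epsilon$, and define $g_{\rm P}(\mX) := g(\mX+\mE) \in \mc T^{2,1,4}_{\rm P}$. The change of variables $\mY = \mX + \mE$ gives
\[
\funcdist_p(f, g_{\rm P})^p = \int_K \|f(\mX) - g(\mX+\mE)\|_p^p \, d\mX = \int_{K+\mE} \|\tilde f(\mY) - g(\mY)\|_p^p \, d\mY \leq \funcdist_p(\tilde f, g)^p \leq \epsilon^p,
\]
which is the claim (up to renaming $\epsilon$).

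The main obstacle is step two: producing a \emph{continuous, globally permutation equivariant, compactly supported} extension $\tilde f$ of the prescribed values on $S$. The disjoint-box choice of $\mE$ is what makes this possible at all, since it guarantees the permuted copies $(K+\mE)\mP$ are disjoint compact sets and that the prescribed values on $S$ are consistent with permutation equivariance. The rest (applying Theorem~\ref{thm:univapprox} and the change-of-variables computation) is routine.
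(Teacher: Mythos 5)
Your proposal is correct in outline but takes a genuinely different route from the paper. The paper does not reduce Theorem~\ref{thm:anyseq2seq} to Theorem~\ref{thm:univapprox} as a black box; it re-runs the three-step construction (quantization, contextual mapping, value mapping) with the specific encoding $\mE$ whose $i$-th column is $(i-1)\vone_d$. That choice forces the column ids $l_1 < l_2 < \cdots < l_n$ to be strictly increasing for \emph{every} input, so every quantized point lands in ``Category~1'' of the proof of Lemma~\ref{lemma:contextmap}: the contextual mapping now separates all inputs rather than all inputs up to permutation, and the value-mapping step costs $O(n(1/\delta)^{dn})$ layers with no $1/n!$ saving. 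Your reduction---shift $K$ by an $\mE$ whose columns occupy disjoint boxes, define $\tilde f((\mX+\mE)\mP) := f(\mX)\mP$ on the $n!$ disjoint permuted copies of $K+\mE$, extend to a function in $\mc F_{\rm PE}$, and invoke Theorem~\ref{thm:univapprox}---is more modular and makes transparent \emph{why} positional encodings remove the equivariance restriction; the paper's direct construction buys explicit, essentially optimal layer counts and a self-contained argument. One detail in your step two does not work as stated: extending by Tietze on a fundamental domain (e.g.\ the sorted region) and ``propagating equivariantly'' can break continuity and well-definedness on the boundary of the domain, where points have nontrivial column stabilizers. The standard fix is to take \emph{any} continuous compactly supported extension $h$ of $\tilde f|_S$ and symmetrize, $\tilde f(\mX) := \frac{1}{n!}\sum_{\mP} h(\mX\mP)\mP^T$; this is continuous, compactly supported, permutation equivariant, and agrees with the prescribed values on $S$ because $S$ is permutation invariant and the prescribed values are already equivariant there. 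With that repair, your change-of-variables estimate goes through and the argument is sound.
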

\vspace{-2pt}
Theorems~\ref{thm:univapprox} and~\ref{thm:anyseq2seq} provide an interesting characterization of the representation power of fixed-width Transformer networks. Since the function classes $\mc T^{h, m, r}$ and $\mc T^{h, m, r}_{\rm P}$ become richer as we increase the values of $(h, m, r)$, our results establish that general Transformer networks are also universal approximators of sequence-to-sequence functions. Remarkably, none of the parameters $(h, m, r)$ depend on the input sequence length $n$ or embedding dimension $d$. 

Here, we would like to again point out that Theorems~\ref{thm:univapprox} and~\ref{thm:anyseq2seq} appear quite surprising at a first glance, given the parameter sharing across all the tokens in a sequence, e.g., feed-forward layers are applied token-wise and the projection matrices in the self-attention layers are the same across different tokens. Furthermore, attention layers can only capture pairwise interaction between different tokens in the sequence. 
In the next subsection, we briefly describe one of our key steps in overcoming the aforementioned restrictions and proving universal approximation power of Transformers.

\vspace*{-3pt}
\subsection{A key step: self-attention layers can implement contextual mappings}
\vspace*{-3pt}
\label{sec:contextual-mapping}
Let us consider a setting where we are interested in embedding two sentences: 1) I am happy; and 2) I am Bob. These sentences are fed to a sequence-to-sequence model as 
$$
\mX = [\mX_{:,1}, \mX_{:,2}, \mX_{:,3}] = [\vv_{\rm I}, \vv_{\rm am}, \vv_{\rm happy}]~~\text{and}~~\tilde{\mX} = [\tilde{\mX}_{:,1}, \tilde{\mX}_{:,2}, \tilde{\mX}_{:,3}] = [\vv_{\rm I}, \vv_{\rm am}, \vv_{\rm Bob}],
$$
where $\vv_{\rm I}, \vv_{\rm am}, \vv_{\rm happy},$ and $\vv_{\rm Bob}$ denote $d$-dimensional embedding for the tokens `I', `am', `happy', and `Bob', respectively. Since the word `I' occurs in different contexts in these sentences, in order to implement arbitrary sequence-to-sequence functions, the sequence-to-sequence model should map the two occurrences of `I' to different values. We formally define this requirement below.



\begin{definition}[Contextual mapping]
\label{def:context-mapping}
Consider a finite set $\sL \subset \reals^{d \times n}$.
A map $q: \sL \to \reals^{1 \times n}$ defines a {\em contextual mapping} if the map satisfies the following:
\begin{enumerate}[\hspace{5pt}1.]
\setlength{\itemsep}{-3pt}
    \item For any $\mL \in \sL$, the $n$ entries in $q(\mL)$ are all distinct.
    \item For any $\mL, \mL' \in \sL$, with $\mL \neq \mL'$, all entries of $q(\mL)$ and $q(\mL')$ are distinct.
\end{enumerate}
\end{definition}
In other words, a contextual mapping maps each token (column) of $\mL \in \sL$ to a unique value which depends on the entire $\mL$; as a result, capturing the precise context of $\mL$. This allows the subsequent token-wise function (e.g., defined by the feed-forward layers in case of Transformer networks) to realize the outputs of any arbitrary sequence-to-sequence functions. 

At the first thought, we can consider getting a contextual mapping by simply averaging all the tokens, because this can capture the one-word difference (e.g., ``happy'' vs.\ ``Bob'') in two different contexts. However, if there are multiple words that are different, it is not guaranteed that the average will be different. Indeed, requiring unique mappings for all the tokens for any change in any number of tokens, is a steep requirement. 

While the self-attention layer does consider {\em pair-wise} interactions among different input tokens, it is not clear if this weak form of pair-wise interaction with shared projection weights is sufficient to extract the underlying context. The following result, which we sketch here, shows that self-attention layers can implement a \emph{permutation equivariant} contextual mapping over almost all elements of a grid in $[0,1]^{d \times n}$. We defer the full statement to Section~\ref{sec:attn-context}.

\newtheorem*{lem-contextmap-sketch}{Lemma~\ref{lemma:contextmap}}

\begin{lem-contextmap-sketch}[informal]
Consider the grid $\sG_{\delta} \defeq \{0, \delta, \dots, 1-\delta\}^{d \times n}$. Then, there exist a function $g_{\rm c}: \reals^{d \times n} \to \reals^{d \times n}$ composed of $\delta^{-d} + 1$ self-attention layers \textup{($h = 2, m = 1$)} and a vector $\vu \in \reals^d$ such that $q(\mL) \defeq \vu^T g_{\rm c}(\mL)$ satisfies the following properties, for a subset $\wt {\sG}_{\delta} \subset \sG_{\delta}$ that contains almost all elements of $\sG_{\delta}$:
\begin{enumerate}[\hspace{5pt}1.]
\setlength{\itemsep}{-3pt}
    \item For any $\mL \in \wt {\sG}_{\delta}$, the entries of $q(\mL)$ are all distinct.
    \item For any $\mL, \mL' \!\in\! \wt {\sG}_{\delta}$ such that $\mL$ is not a permutation of $\mL'$, all entries of $q(\mL)$, $q(\mL')$ are distinct.
\end{enumerate}
\end{lem-contextmap-sketch}

Lemma~\ref{lemma:contextmap} shows that a series of self-attention layers can implement contextual mappings, despite the apparent restriction that each of them can only capture pair-wise interaction. However, the restriction of permutation equivarance still exists because attention layers are inherently permutation equivariant. Coupled with the ability of token-wise feed-forward layers to map different values in $q(\mL)$ to arbitrary output values, we can prove universal approximation capability of Transformers.




\vspace*{-5pt}
\subsection{Proof of the universal approximation theorem (Theorem~\ref{thm:univapprox})}
\vspace*{-4pt}
\label{sec:proof-univapprox}

Next, we outline the proof of Theorem~\ref{thm:univapprox} in greater detail. We refer the reader to Section~\ref{sec:proof_anyseq2seq} for the proof of Theorem~\ref{thm:anyseq2seq}, since it is a modification of Theorem~\ref{thm:univapprox}. Even though Theorems~\ref{thm:univapprox} and~\ref{thm:anyseq2seq} do not specifically mention the required depth for approximation, our proof techniques do characterize it, and we show that our construction is tight in the number of parameters. We defer the discussion of depth to Section~\ref{sec:tightness}.

Recall that we want to show that given a function $f \in \mc F_{\rm PE}$, we can find a Transformer network $g \in \mc T^{2,1,4}$ such that $\funcdist_p(f, g) \leq \epsilon$. Without loss of generality, we can assume that the compact support of $f$ is contained in $[0, 1]^{d \times n}$. We achieve our desired objective in three key steps:

\vspace{-2pt}
\noindent{\bf Step 1.~Approximate $\mc F_{\rm PE}$ with piece-wise constant functions.}~We first use (a variant of) the classical result that any continuous function can be approximated up to arbitrary accuracy by piece-wise constant functions. For $\delta > 0$, we define the following class of piece-wise constant functions.
\begin{align*}
    \overline{\mc F}_{\rm PE}(\delta) \defeq
    \left \{ f: \mX \mapsto \sum\nolimits_{\mL \in \sG_{\delta}} \mA_\mL \indic{\mX \in \sS_\mL} 
    \mid
    f \text{ is permutation equivariant, } \mA_\mL \in \reals^{d \times n} \right \},
\end{align*}
where $\sG_{\delta} \defeq \{0, \delta, \dots, 1-\delta \}^{d \times n}$ and, for a grid point $\mL \in \sG_{\delta}$, $\sS_\mL \defeq \prod_{j=1}^d \prod_{k=1}^n [\emL_{j,k}, \emL_{j,k}+\delta) \subset [0,1]^{d \times n}$ denotes the associated cube of width $\delta$. Let $\overline{f} \in \overline{\mc F}_{\rm PE}(\delta)$ be such that $\funcdist_{p}(f, \overline{f}) \leq \epsilon/3$. 



\vspace{-2pt}
\noindent{\bf Step 2.~Approximate $\overline {\mc F}_{\rm PE} (\delta)$ with {\em modified} Transformers.} 
We then consider a slightly modified architecture for Transformer networks, where the softmax operator $\sfmx[\cdot]$ and $\relu(\cdot)$ are replaced by the hardmax operator $\hdmx[\cdot]$ and an activation function $\phi \in \Phi$, respectively. Here, the set of allowed activations $\Phi$ consists of all piece-wise linear functions with at most three pieces, where at least one piece is constant. Let $\overline{\mc T}^{h, m, r}$ denote the function class corresponding to the sequence-to-sequence functions defined by the modified Transformer networks. The following result establishes that the modified Transformer networks in $\overline{\mc T}^{2, 1, 1}$ can closely approximate functions in $\overline {\mc F}_{\rm PE}(\delta)$.

\begin{proposition}
\label{prop:part2}
For each $\overline f \in \overline {\mc F}_{\rm PE} (\delta)$ and $1\leq p < \infty$, $\exists$ $\overline g \in \overline {\mc T}^{2,1,1}$ such that $\funcdist_p(\overline f, \overline g) = O(\delta^{d/p})$.
\end{proposition}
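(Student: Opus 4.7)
The plan is to construct $\overline{g} \in \overline{\mc T}^{2,1,1}$ as a composition of three successive stages, each realized by a stack of modified Transformer blocks in which one of the two sublayers (attention or feed-forward) is deactivated by zero weight choices so that the residual skip connection makes it act as the identity. The three stages are: (i) a \emph{quantization} stage that maps every $\mX \in \sS_\mL$ to the grid point $\mL$ itself, (ii) a \emph{contextual encoding} stage supplied by the self-attention construction of Lemma~\ref{lemma:contextmap}, which tags each column of the intermediate representation with the scalar signature $q(\mL)_k = \vu^\top g_{\rm c}(\mL)_{:,k}$, and (iii) a \emph{value assignment} stage that reads off the signature token-wise and writes the prescribed column $(\mA_\mL)_{:,k}$ of $\overline f(\mX)$.

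For stage~(i) I would use $O(d/\delta)$ feed-forward blocks (attention switched off via $\mW_V^i = \vzero$) to implement a coordinate-wise, token-wise staircase that maps $x \in [k\delta,(k+1)\delta)$ to $k\delta$. With $r=1$ and a three-piece piecewise-linear $\phi \in \Phi$ having at least one constant piece, one block can add a single ``step'' to one fixed coordinate by letting $\mW_1$ select that coordinate, $b_1$ place the breakpoint, and $\mW_2$ contribute a scaled standard basis vector with the step height; stacking $1/\delta - 1$ such blocks per coordinate produces the full staircase. For stage~(ii) I would insert the $\delta^{-d}+1$ self-attention layers from Lemma~\ref{lemma:contextmap} verbatim, with their feed-forward sublayers turned off by $\mW_1=\vzero$, $\mW_2=\vzero$; this writes the signature $q(\mL)$ into a reserved row of the $d \times n$ stream while leaving the other rows (which still hold $\mL$ or scratch space) essentially undisturbed. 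For stage~(iii) I exploit that, on $\wt{\sG}_\delta$, the scalars $\{q(\mL)_k : \mL \in \wt{\sG}_\delta,\ k \in \{1,\dots,n\}\}$ form a finite set of distinct real numbers, so a further stack of feed-forward blocks can perform a scalar-to-vector lookup: each block reads the signature row via $\mW_1$, uses $\phi$ and $b_1$ to fire a thin bump supported at one specific signature value $v = q(\mL)_k$, and uses $\mW_2$ to write the corresponding target column $(\mA_\mL)_{:,k}$ into the output rows. One block is spent per distinct signature value, and the parameter sharing across tokens is benign here precisely because the signature already encodes both $\mL$ and $k$.

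Permutation equivariance of $\overline g$ is automatic from Claim~\ref{claim:equivariance} and from the permutation equivariance of $\overline f$, so permuted inputs are consistently handled by the lookup. The $d_p$-error is controlled by the Lebesgue measure of the set on which $\overline f$ and $\overline g$ disagree, which lies inside the union of (a) the thin transition strips of the staircase activations (whose width can be driven to zero by choosing $\phi$ with a very short linear piece), (b) the cubes $\sS_\mL$ indexed by the exceptional grid points $\sG_\delta \setminus \wt{\sG}_\delta$ from Lemma~\ref{lemma:contextmap}, and (c) any boundary cubes poking outside $[0,1]^{d \times n}$. Since $\overline f$ and $\overline g$ are uniformly bounded on the support, the $p$-th power of the error is at most a constant times the total measure of these regions; the dominant $O(\delta^d)$ contribution comes from the exceptional cubes, yielding $\funcdist_p(\overline f, \overline g) = O(\delta^{d/p})$.

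The hardest part, I expect, is the value-assignment stage: verifying that a long sequence of $r=1$ feed-forward layers, each contributing only a rank-one piecewise-linear correction to the entire $d \times n$ representation, can realize the full lookup $(v, \mL, k) \mapsto (\mA_\mL)_{:,k}$ without later blocks erasing the signature row before it is read, and without the bumps for different signature values overlapping or corrupting one another across the three stages. The key leverages will be that the signatures form a \emph{finite discrete} set on $\wt{\sG}_\delta$ (so each bump can be made narrower than the minimum gap between distinct signature values) and that all writes in $\mW_2$ in stage~(iii) can be confined to rows orthogonal to the signature row, so the signature is preserved for subsequent lookups to read.
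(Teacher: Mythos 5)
Your three-stage plan is exactly the paper's proof: stage (i) is Lemma~\ref{lemma:quantize} (a token-wise staircase built from $O(d/\delta)$ feed-forward blocks with the attention sublayer zeroed out), stage (ii) is Lemma~\ref{lemma:contextmap} inserted verbatim, stage (iii) is Lemma~\ref{lemma:memorize} (one $r=1$ feed-forward block per distinct signature value, firing a $\delta$-narrow bump and writing the target column), and the error accounting via the measure $O(\delta^d)$ of the exceptional cubes $\bigcup_{\mL \in \sG_{\delta} \setminus \wt{\sG}_{\delta}} \sS_\mL$ is also the paper's.

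One step as you describe it would fail, though: the behavior of $\overline g$ \emph{outside} $[0,1]^{d \times n}$. The distance $\funcdist_p$ integrates over all of $\reals^{d \times n}$, and $\overline f$ vanishes off $[0,1]^{d\times n}$; your region (c) is not a bounded collection of ``boundary cubes'' but the entire unbounded complement. A staircase that merely leaves out-of-range entries untouched, followed by the selective-shift layers, produces signatures there that need not miss every lookup bump, so $\overline g$ would generically be nonzero on a set of infinite measure and $\funcdist_p(\overline f,\overline g)$ would not even be finite. The paper handles this with a dedicated device: the quantization map sends any entry outside $[0,1)$ to the sentinel value $-\delta^{-nd}$ (hence the extended grid $\sG^+_{\delta}$), Lemma~\ref{lemma:contextmap} then guarantees (Properties~\ref{lemma:contextmap}.\ref{cond:3} and~\ref{lemma:contextmap}.\ref{cond:4}) that all legitimate signatures land in an interval $[t_l,t_r]$ while all sentinel-contaminated and duplicate-column inputs land outside it, and the first layers of the value-mapping stage use this separation to zero out every such column before the lookup begins. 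You need some version of this flagging mechanism for the claimed bound to hold. A minor further remark: your worry about ``thin transition strips'' in the staircase is unnecessary at this stage, since $\Phi$ admits discontinuous piecewise-linear activations, so the quantization in the modified class is exact; the transition strips only appear later, when $\overline{\mc T}^{2,1,1}$ is approximated by genuine ReLU Transformers.
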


\vspace{-2pt}
\noindent{\bf Step 3.~Approximate modified Transformers with (original) Transformers.}
Finally, we show that $\overline{g} \in \overline {\mc T}^{2,1,1}$ can be approximated by $\mc T^{2, 1, 4}$. Let $g \in {\mc T}^{2,1,4}$ be such that $\funcdist_{p}(\overline{g}, g)\leq \epsilon/3$.



Theorem~\ref{thm:univapprox} now follows from these three steps, because we have
\begin{equation*}
\funcdist_p(f, g) \leq \funcdist_p(f, \overline f) + \funcdist_p(\overline f, \overline g) + \funcdist_p(\overline g,g) \leq {2\epsilon}/{3} + O(\delta^{d/p}).
\end{equation*}
Choosing $\delta$ small enough ensures that $\funcdist_p(f, g) \leq \epsilon$. \qed

We refer the reader to Sections~\ref{appen:thm2-part1} and \ref{appen:thm2-part3} in the supplementary material for the formal statements and proofs of Steps~$1$ and $3$, respectively. As for Step~$2$, which is the most critical step in establishing the universal approximation property of Transformers, we provide a sketch of the proof of Proposition~\ref{prop:part2} in the next section, and refer the reader to Section~\ref{sec:appx_proof_prop} for the complete proof.

\vspace*{-5pt}
\section{Proof sketch of Proposition~\ref{prop:part2}: different roles of two layers}
\vspace*{-3pt}
\label{sec:different-roles}




As mentioned earlier, the heavy parameter sharing in Transformers makes the goal of universally approximating sequence-to-sequence functions seemingly difficult. Both the self-attention and the feed-forward layer weights inside a Transformer block are fixed across $n$ tokens. In this section, we show that Transformers are able to overcome this architectural constraint, and compute contextual mappings of the entire input sequence just based on the pair-wise interactions. The token-wise feedforward layers then transform these contextual mappings to the desired output sequence. 


We highlight these inner workings of Transformers en route to proving Proposition~\ref{prop:part2}. We want to show that given a piece-wise constant function $\overline f \in \overline{\mc F}_{PE}(\delta)$, there exists a modified Transformer network $\overline{g} \in \overline{\mc T}^{2, 1, 1}$ that closely approximates $\overline{f}$. We achieve this goal by establishing the following three claims, which correspond to Lemmas~\ref{lemma:quantize}, \ref{lemma:contextmap}, and \ref{lemma:memorize}.
\begin{enumerate}[\hspace{5pt}1.]
\setlength{\itemsep}{2pt}
\item Given an input $\mX \in \reals^{d \times n}$, a series of feed-forward layers in the modified Transformer network can quantize $\mX$ to an element $\mL$ on the extended grid $\sG^+_{\delta} \defeq \{ -\delta^{-nd}, 0, \delta, \dots, 1-\delta \}^{d \times n}$.
\item Next, a series of self-attention layers in the modified Transformer network can take the input $\mL$ and implement a {\em contextual mapping} $q$ such that, for $\mL$ and $\mL'$ that are not permutation of each other, all the elements in $q(\mL)$ and $q(\mL')$ are distinct.
\item Finally, a series of feed-forward layers in the modified Transformer network can map elements of the contextual embedding $q(\mL)$ to the desired output value of $\overline{f} \in \overline{\mc F}_{\rm PE}$ at the input $\mX$.
\end{enumerate}

Before discussing these three claims in detail, we note that even though a Transformer network stacks self-attention and feed-forward layers in an alternate manner, the skip connections enable these networks to employ a composition of multiple self-attention or feed-forward layers. Furthermore, as alluded earlier, these three steps clearly highlight the different roles that self-attention and feed-forward layers play in realizing the ability to universally approximate sequence-to-sequence functions: 1) self-attention layers compute precise contextual maps; and 2) feed-forward layers then assign the results of these contextual maps to the desired output values. 

\vspace*{-3pt}
\subsection{Quantization by feed-forward layers}
\vspace*{-3pt}
\label{sec:ff-quantize}
Since our objective in Proposition~\ref{prop:part2} is to approximate the function $\overline{f} \in \overline {\mc F}_{\rm PE}(\delta)$,
which takes a constant value on the cubes $\sS_{\mL}$'s, the (modified) Transformer network approximating $\overline{f}$ first quantizes the input $\mX$ according to these cubes. In particular, we want each input $\mX \in \sS_{\mL}$ to be mapped to the point $\mL$. The following result shows that a modified Transformer network can indeed implement this quantization map with a composition of multiple feed-forward layers. 

\begin{lemma}
\label{lemma:quantize}
Consider a scalar quantization map $g^{\rm ent}_{q} : \reals \to  \{ -\delta^{-nd}, 0, \delta, \dots, 1-\delta \}$:
\begin{equation*}
    g_{\rm q}^{\rm ent} (t) = 
    \begin{cases}
    k\delta & \text{ if } k\delta \leq t < (k+1)\delta, ~~k = 0, \dots, 1/\delta-1,\\
    -\delta^{-nd} & \text{ otherwise. }
    \end{cases}
\end{equation*}
There exists a function $g_{\rm q}: \reals^{d \times n} \mapsto \sG^{+}_{\delta}$ composed of $\frac{d}{\delta} + d$ token-wise feed-forward layers with $r=1$ and activations in $\Phi$, which employs the scalar quantization $g^{\rm ent}_{q}$ to each entry of its input.
\end{lemma}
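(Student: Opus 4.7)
The plan is to build $g_{\rm q}$ as the composition of $d(1/\delta + 1) = d/\delta + d$ modified Transformer blocks in which the self-attention layer is set to the identity (e.g., by zeroing the value matrices $\mW_V^i$), so that each block reduces to a single token-wise feed-forward layer of width $r = 1$ with an activation $\phi \in \Phi$. I would apply the quantization row by row: for each $j \in \{1, \dots, d\}$, a consecutive group of $1/\delta + 1$ layers acts only on row $j$, the first $1/\delta$ of them quantizing the in-range entries and the last one mapping out-of-range entries to $-\delta^{-nd}$. Row $j$ is isolated by picking $\mW_1 = \ve_j^T \in \reals^{1\times d}$ (the $j$-th standard basis vector), $\mW_2 = c\,\ve_j \in \reals^{d\times 1}$, $\vb_2 = \vzero$, and $\vb_1 = b \in \reals$ in that layer; the feed-forward update then reduces to the scalar map $\emX_{j,k'} \leftarrow \emX_{j,k'} + c\,\phi(\emX_{j,k'} + b)$ applied entrywise, while every other row passes through unchanged.

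For the $1/\delta$ quantization layers in the row-$j$ group I would use, at layer $k$ (with $k = 0, 1, \dots, 1/\delta - 1$), the three-piece ``bump'' activation
\[
\phi_k(s) \;=\; \begin{cases} s & s \in [0,\delta), \\ 0 & s \notin [0,\delta), \end{cases}
\]
paired with $c = -1$ and $b = -k\delta$. This $\phi_k$ lies in $\Phi$ because it has three pieces and two of them are the constant $0$. A direct case check shows that the scalar update $t \mapsto t - \phi_k(t - k\delta)$ is the identity for every $t \notin [k\delta, (k+1)\delta)$ and collapses every $t \in [k\delta, (k+1)\delta)$ to exactly $k\delta$. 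Running the layers in the order $k = 0, 1, \dots, 1/\delta - 1$, each $t \in [0,1)$ is quantized to $\delta\lfloor t/\delta\rfloor$ at exactly one layer (the one matching its bucket) and is then a fixed point of every remaining layer, since its value lies strictly below every later threshold; every $t \notin [0,1)$ is a fixed point of all $1/\delta$ layers.

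The last layer in the row-$j$ group sends the remaining out-of-range values to $-\delta^{-nd}$ by using the three-piece activation
\[
\phi_{\rm out}(s) \;=\; \begin{cases} -\delta^{-nd} - s & s \notin [0,1), \\ 0 & s \in [0,1), \end{cases}
\]
with $c = 1$, $b = 0$; the middle piece is constant, so $\phi_{\rm out} \in \Phi$. After the quantization step every in-range entry of row $j$ lies in $\{0, \delta, \dots, 1-\delta\} \subset [0,1)$ and every out-of-range entry still lies outside $[0,1)$, so the update $t \mapsto t + \phi_{\rm out}(t)$ is the identity on the former and sends the latter to $-\delta^{-nd}$, exactly matching $g_{\rm q}^{\rm ent}$. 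Stacking the row-groups for $j = 1, \dots, d$ gives the claimed depth $d/\delta + d$, and the resulting composition $g_{\rm q}$ applies $g_{\rm q}^{\rm ent}$ entrywise.

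The main obstacle I anticipate is bookkeeping rather than any deep difficulty: one must verify (i) that the bump and selector activations above genuinely belong to $\Phi$ (three pieces, at least one constant; the construction relies on $\Phi$ allowing discontinuous pieces, which matches the freedom left by the lemma's definition), and (ii) that once an entry of row $j$ has been quantized at layer $k$, no subsequent layer in the same row-group perturbs it — both settled by the short case analysis above. If one insisted on continuous activations, each discontinuous bump could be replaced by a pair of layers built from saturated ReLUs with nearby thresholds, roughly doubling the depth while preserving the $O(d/\delta)$ scaling.
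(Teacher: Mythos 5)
Your proposal is correct and matches the paper's construction essentially exactly: $d$ groups of $1/\delta+1$ width-one feed-forward layers, one group per row, with the same bump activation quantizing each bucket $[k\delta,(k+1)\delta)$ to $k\delta$ and a three-piece selector sending out-of-range values to $-\delta^{-nd}$. The only (immaterial) difference is ordering: the paper applies the out-of-range clipping layer \emph{first} in each row-group and then quantizes, whereas you quantize first and clip last; both orders work by the same fixed-point case analysis you give.
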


As desired, the function $g_{\rm q}$ maps any $\mX \in \sS_\mL$ to $\mL$. Furthermore, if any element of $\mX$ is not in $[0, 1]$, the element is mapped to $-\delta^{-nd}$, indicating that $\mX$ is outside the compact support of $\overline{f} \in \overline{\mc F}_{\rm PE}(\delta)$.

\vspace*{-3pt}
\subsection{Contextual mapping by self-attention layers}
\vspace*{-3pt}
\label{sec:attn-context}
In this subsection, we show that the (modified) Transformer network can compute contextual mappings (cf.~Definition~\ref{def:context-mapping}) from the output $\mL \in \sG^{+}_{\delta}$ of the map $g_{\rm q}$ (cf.~Section~\ref{sec:ff-quantize}) by using a composition of self-attention layers.
The following lemma, sketched earlier in Section~\ref{sec:contextual-mapping}, shows that the (modified) Transformer networks can implement a \emph{permutation equivariant} contextual mapping over almost all elements of $\sG_{\delta}$, while mapping the rest of elements in $\sG^{+}_{\delta}$ to a disjoint set.

\begin{lemma}
\label{lemma:contextmap}
Consider the following subset of $\sG_{\delta} = \{0, \delta, \dots, 1-\delta\}^{d \times n}$:
\begin{equation*}
\wt {\sG}_{\delta}
\defeq 
\{ \mL \in \sG_{\delta} \mid \text{$\mL_{:,i} \neq \mL_{:,j}$ for all $i \neq j$} \}.
\end{equation*}
Assume that $n \geq 2$ and $\delta^{-1} \geq 2$.
Then, there exist a function $g_{\rm c}: \reals^{d \times n} \to \reals^{d \times n}$ composed of $\delta^{-d} + 1$ self-attention layers \textup{($h = 2, m = 1$)} that employ the $\hdmx$ operator, a vector $\vu \in \reals^d$, constants $t_l, t_r \in \reals$ \textup{($0 < t_l < t_r$)}, such that $q(\mL) \defeq \vu^T g_{\rm c}(\mL)$ satisfies the following properties:
\begin{enumerate}[\hspace{5pt}1.]
\setlength{\itemsep}{-3pt}
    \item \label{cond:1} For any $\mL \in \wt {\sG}_{\delta}$, the entries of $q(\mL)$ are all distinct.
    \item \label{cond:2} For any $\mL, \mL' \!\in\! \wt {\sG}_{\delta}$ such that $\mL$ is not a permutation of $\mL'$, all entries of $q(\mL)$, $q(\mL')$ are distinct.
    \item \label{cond:3} For any $\mL \in \wt {\sG}_{\delta}$, all the entries of $q(\mL)$ are in $[t_l, t_r]$.
    \item \label{cond:4} For any $\mL \in \sG^+_{\delta} \setminus \wt {\sG}_{\delta}$, all the entries of $q(\mL)$ are outside $[t_l, t_r]$.
\end{enumerate}
\end{lemma}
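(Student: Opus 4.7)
The plan is to build $g_{\rm c}$ from one \emph{selective-shift} self-attention layer per possible column value (there are $N^d = \delta^{-d}$ of them, with $N=\delta^{-1}$), followed by one cleanup layer. First fix $\vu = (1, \delta^{-1}, \dots, \delta^{-(d-1)})^T$, so that $\vb \mapsto \vu^T \vb$ is a base-$N$ expansion that sends $\{0, \delta, \dots, 1-\delta\}^d$ bijectively onto the scalar grid $\{0, \delta, 2\delta, \dots, (N^d-1)\delta\}$. Each column $\mL_{:,i}$ thereby acquires a canonical scalar label $\vu^T \mL_{:,i}$, and my goal is to design the self-attention layers so that the final labels $q_i(\mL) = \vu^T (g_{\rm c}(\mL))_{:,i}$ satisfy conditions 1--4.

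The core building block is a selective-shift attention layer with $h=2, m=1$. The key observation is that under $\hdmx$, a single head picks $\argmax$ or $\argmin$ of its key entries depending on the sign of its (biased) query value. I would set both key projections to $\vu^T$, and use two heads whose query biases $\vb_Q^1, \vb_Q^2$ straddle a chosen threshold $b$ by a small offset $\eta$. The two heads then agree for any column $k$ whose current $\vu$-label lies outside the window $(b, b+\eta)$, and disagree precisely on columns with label in that window. Combining them with opposite signs through $\mW_O \propto \vu$ produces an additive shift along the $\vu$-direction that is nonzero only on the columns whose current label falls in the target slot; the skip connection leaves everything else unchanged.

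Composing one such layer for each slot $b \in \{0, \delta, \dots, (N^d-1)\delta\}$, with per-layer shift magnitudes decreasing geometrically in a very large base $M$, makes the composite shift \emph{positional-number-system injective}: the final label of column $i$ reads as $\vu^T \mL_{:,i}$ plus a base-$M$ expansion whose digits record which slots are occupied by the columns of $\mL$. Consequently (1) inside $\wt{\sG}_{\delta}$ distinct columns produce distinct leading digits, so the $n$ entries of $q(\mL)$ are all distinct; (2) two inputs $\mL, \mL' \in \wt{\sG}_{\delta}$ that are not permutations of each other differ in occupancy pattern and hence in the lower-order digits, making $q(\mL)$ and $q(\mL')$ disjoint; (3) the total shift budget bounds every label within an interval $[t_l, t_r]$. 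I expect the main obstacle to be precisely this calibration --- choosing $M$ large enough, and the per-slot shift size small enough, that the $\delta^{-d}$ layers compose without collisions, while keeping the labels in a bounded range --- since $h=2, m=1$ leaves essentially no slack within each layer and the cumulative shifts must nonetheless implement an injective multiset encoding.

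A final attention layer handles $\sG^+_\delta \setminus \wt{\sG}_{\delta}$. Any such $\mL$ either has two equal columns --- in which case an earlier hardmax already tied in a recognizable way, and the extra layer amplifies the deviation past $t_r$ --- or contains the sentinel entry $-\delta^{-nd}$ in some coordinate, whose contribution to $\vu^T \mL_{:,i}$ dwarfs every accumulated shift and can be pushed outside $[t_l, t_r]$ by one more large-coefficient selective shift. Setting $t_l, t_r$ to the extremes of $q(\wt{\sG}_{\delta})$ established in step (3) then yields all four conditions.
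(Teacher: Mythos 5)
Your reconstruction of the selective shift gadget itself is essentially the paper's: two hardmax heads of size $1$ with keys and queries along $\vu=(1,\delta^{-1},\dots,\delta^{-d+1})$, thresholds straddling a window, combined with opposite signs so that only columns whose $\vu$-label lies in the window are shifted. The gap is in how you compose the $\delta^{-d}$ layers. With \emph{fixed} per-slot shift magnitudes (your geometrically decaying base-$M$ coefficients), each column is shifted exactly once --- by the one layer whose slot contains its own label --- and by an amount that depends only on that slot. The final label of column $i$ is then a function of $\mL_{:,i}$ alone; no information about the other columns ever reaches it, so the claimed ``base-$M$ expansion recording which slots are occupied'' is not produced by the mechanism you describe, and Property~2 fails outright: two non-permutation inputs sharing a column would share an entry of $q$. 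The only data-dependent quantities a hardmax head can read off are $\max_k \vu^T\mZ_{:,k}$ and $\min_k \vu^T\mZ_{:,k}$, and the paper's construction hinges on using exactly these as the shift amount, $\delta^{-d}(\max_k\vu^T\mZ_{:,k}-\min_k\vu^T\mZ_{:,k})$, applied in \emph{increasing} slot order. Each shifted column then overtakes the running maximum, so the $j$-th shift depends on all previously shifted columns; the shifts cascade, and the final maximum $\wt l_n = l_n + \sum_{k=1}^{n-1}\delta^{-kd}(l_{n-k}-l_{n-k+1}) + \delta^{-nd}(l_n-l_1)$ is injective in the sorted label vector by a coarse-resolution (base-$\delta^{-d}$ digit) argument. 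Even then the $n$ entries $\wt l_1<\dots<\wt l_n$ of two different inputs could interleave; Property~2 is only secured by the final layer, which adds the data-dependent global shift $\delta^{-(n+1)d}\wt l_n$ to every entry, confining each input's outputs to an interval of width $\delta^{-(n+1)d+1}$ indexed injectively (with gaps of at least $\delta\cdot\delta^{-(n+1)d}$) by $\wt l_n$. Your ``cleanup layer'' does not play this role.

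The treatment of $\sG^+_{\delta}\setminus\wt{\sG}_{\delta}$ is also not yet a proof. For duplicate columns there is no ``recognizable tie to amplify'': the paper's argument is quantitative --- with only $n'<n$ distinct labels, only $n'$ shifts fire, so $\wt l_n \le \delta^{-n'd+1}(\delta^{-d}-1)-\delta(\delta^{-d}-1)^2$, which sits strictly below the Category-1 lower bound $\delta^{-(n-1)d+1}(\delta^{-d}-1)$; after the global shift all entries land below $t_l$. For columns containing the sentinel $-\delta^{-nd}$, the relevant fact is that their labels are negative, are never selected by any shift window, and are pushed further negative (or the positive ones pushed above $t_r$) by the last layer. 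You would need to supply these estimates to establish Properties~3 and~4.
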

At this point, a few remarks about the result in Lemma~\ref{lemma:contextmap} are in order. First, since the Transformer networks are bound to implement permutation invariant maps, we require the Property~\ref{lemma:contextmap}.\ref{cond:2} to hold for the pair of sequences that cannot be mapped to each other via permutation of columns. Furthermore, the self-attention layers implement the desirable contextual map for only $\wt{\sG}_{\delta} \subseteq \sG_{\delta}$, where all columns of $\mL$ are distinct. Note that for small $\delta$, $\sG_{\delta} \setminus \wt {\sG}_{\delta}$ constitutes a negligible fraction of $\sG_{\delta}$ because  $|\sG_{\delta} \setminus \wt {\sG}_{\delta}| = O(\delta^d |\sG_{\delta}|)$. The function $q$ in Lemma~\ref{lemma:contextmap} maps the elements of $\sG^+_{\delta} \setminus \wt {\sG}_{\delta}$ outside $[t_l, t_r]$---the interval where the outputs of the contextual mapping for $\wt \sG_{\delta}$ reside.

\vspace*{-3pt}
\subsubsection{Proof sketch of Lemma~\ref{lemma:contextmap}}
\vspace*{-3pt}
Since Lemma~\ref{lemma:contextmap} is one of the major technical contributions of this paper, we provide a short sketch of its proof. The complete proof is presented in Section~\ref{sec:proof-contextmap}.
For simplicity, we consider the case $d = 1$, so the input $\mL \in \sG^+_{\delta}$ is a row vector of length $n$. 

The key idea of the proof is that, using two attention heads of size $1$, one can implement a self-attention layer that shifts up input entries that are \emph{in a specific interval}, while leaving all other entries intact. We call this the \textbf{selective shift operation}. Since the entries in $\mL$ are quantized, we apply the selective shift operation to $0, \delta, \dots, 1-\delta$ using $1/\delta$ attention layers. Interestingly, the value of the largest output entry after these operations is unique for each $\mL \in \wt {\sG}_{\delta}$ up to permutations. Using the largest entry, one can add one last layer that shifts up the entire matrix and outputs $q(\mL)$ that satisfies Properties~\ref{lemma:contextmap}.\ref{cond:1} and~\ref{lemma:contextmap}.\ref{cond:2} of the lemma.

More concretely, the following function $\Psi: \reals^{1 \times n} \to \reals^{1 \times n}$, parametrized by $b, b' \in \reals$ satisfying $b < b'$, can be implemented with two attention heads of size $1$ with the hardmax ($\hdmx$) operator:
\begin{align*}
    \Psi(\mZ; b, b')_{1, j} =
    \begin{cases}
        \max_k \emZ_{1,k} - \min_k \emZ_{1,k} & \text{ if } b < \emZ_{1,j} < b', \\
        0 & \text{ if } \emZ_{1,j} < b \text{ or } \emZ_{1,j} > b'.
    \end{cases}
\end{align*}
If we define an attention layer of the form $\mZ \mapsto \mZ + \Psi(\mZ;b,b')$, then any entry $\emZ_{1,j}$ in $(b,b')$ is shifted up by $\max_k \emZ_{1,k} - \min_k \emZ_{1,k}$, \emph{while all the other entries stay untouched}. We can choose $b$ and $b'$ to selectively shift certain entries, hence the name selective shift operation.

We stack $1/\delta$ self-attention layers, with attention parts $\delta^{-1} \Psi(\cdot;l-\delta/2,l+\delta/2)$ for each $l \in \{0, \delta, \dots, 1-\delta\}$, in increasing order of $l$. 
With these layers, we can apply the selective shift operations to input entries of values $0, \delta, \dots, 1-\delta$.
To see how the shift operations modify the input, now consider $n = 2$ for simplicity, and let $\mL = \begin{bmatrix} l_1 & l_2 \end{bmatrix} \in \wt {\sG}_{\delta}$. Without loss of generality, we can assume $l_1 < l_2$.
The selective shift operation is applied to $l_1$ first, shifting it by $\delta^{-1}(\max \mL - \min \mL) = \delta^{-1}(l_2-l_1)$, resulting in $\wt l_1 = l_1+\delta^{-1}(l_2-l_1) > l_2$. After that, the operation on $l_2$ shifts it up by $\delta^{-1} (\wt l_1 - l_2)$. Thus, the first $1/\delta$ layers map $\mL = \begin{bmatrix} l_1 & l_2 \end{bmatrix}$ ($l_1 < l_2$) to 
\begin{equation*}
\wt \mL 
= \begin{bmatrix} \wt l_1 & \wt l_2 \end{bmatrix}
\defeq 
\begin{bmatrix} l_1+\delta^{-1}(l_2-l_1) & l_2+(\delta^{-2}-\delta^{-1})(l_2-l_1) \end{bmatrix}.
\end{equation*}
We can show that the map from $\begin{bmatrix} l_1 & l_2 \end{bmatrix} \in \{ \mL \in \wt {\sG}_{\delta} \mid l_1 < l_2 \}$ to $\wt l_2$ is one-to-one, and that $0 < \wt l_1 < \wt l_2 < \delta^{-2}$. 
We then add one last layer that shifts all positive entries of $\wt \mL$ by $\delta^{-3} \max \wt \mL = \delta^{-3} \wt l_2$, whose output we denote by $q(\mL) = \begin{bmatrix} \delta^{-3} \wt l_2 + \wt l_1 & \delta^{-3} \wt l_2 + \wt l_2 \end{bmatrix}$. 
All entries of $q(\mL)$ are in $[\delta^{-3} \wt l_2, \delta^{-3} \wt l_2 + \delta^{-2})$, and this interval is disjoint for different $\mL$'s because $\mL \mapsto \wt l_2$ is one-to-one. Thus, $q(\mL)$ satisfies Properties~\ref{lemma:contextmap}.\ref{cond:1} and~\ref{lemma:contextmap}.\ref{cond:2} of the lemma. 
The remaining details are in Section~\ref{sec:proof-contextmap}.


\vspace*{-3pt}
\subsection{Function value mapping by feed-forward layers}
\vspace*{-3pt}
\label{sec:ff-value-mapping}
This brings us to the final step, which demonstrates the key utility of the feed-forward layers. After the contextual mapping by self-attention layers, each token captures the entire context available in the input sequence. The following result shows that token-wise application of a composition of feed-forward layers can map these tokens to the desired output values required by the function $\overline{f}$. 

\begin{lemma}
\label{lemma:memorize}
Let $g_c : \reals^{d \times n} \to \reals^{d\times n}$ be the function from Lemma~\ref{lemma:contextmap}. Then, there exists a function $g_{\rm v}: \reals^{d \times n} \to \reals^{d \times n}$ composed of $O(n (\frac{1}{\delta})^{dn}/n!)$ token-wise feed-forward layers \textup{($r=1$)} with activations in $\Phi$ such that $g_{\rm v}$ is defined by a token-wise function $g_{\rm v}^{\rm tkn}:\reals^d \to \reals^d$ on each column,
\begin{equation*}
    g_{\rm v} (\mZ) = 
    \begin{bmatrix}
    g_{\rm v}^{\rm tkn} (\mZ_{:, 1}) &
    \cdots &
    g_{\rm v}^{\rm tkn} (\mZ_{:, n})
    \end{bmatrix},
\end{equation*}
where for all $j \in \{1, \dots, n\}$,
\begin{equation*}
    g_{\rm v}^{\rm tkn}(g_{\rm c}(\mL)_{:,j}) =
    \begin{cases}
        (\mA_{\mL})_{:,j} & \text{ if } \mL \in \wt {\sG}_{\delta},\\
        \vzero_{d} & \text{ if } \mL \in \sG^+_{\delta} \setminus \wt {\sG}_{\delta}.
    \end{cases}
\end{equation*}
\end{lemma}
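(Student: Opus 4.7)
By Lemma~\ref{lemma:contextmap}, the scalar signature $s := \vu^\top g_{\rm c}(\mL)_{:,j}$ takes distinct values across all pairs $(\mL, j)$ with $\mL \in \wt{\sG}_\delta$ modulo column permutations, and lies outside $[t_l, t_r]$ for $\mL \in \sG^+_\delta \setminus \wt{\sG}_\delta$. Since $g_{\rm c}$ is a composition of permutation-equivariant self-attention layers, any two $(\mL, j), (\mL', j')$ yielding the same signature must be column-permutation images of each other, so $g_{\rm c}(\mL)_{:,j} = g_{\rm c}(\mL')_{:,j'}$, and by permutation equivariance of $\overline f \in \overline{\mc F}_{\rm PE}(\delta)$ the target columns also coincide. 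Hence the required token-wise map $g_{\rm v}^{\rm tkn}$ is well-defined as a function of $s$ alone. Let $s_1, \ldots, s_M$ enumerate the distinct in-range signatures, with $M = n |\wt{\sG}_\delta|/n! = O(n(1/\delta)^{dn}/n!)$, and let $\vv_k \in \R^d$ denote the target vector assigned to $s_k$.

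My plan is to construct $g_{\rm v}$ as three groups of token-wise FF layers, each performing a rank-$1$ additive update $\vz \mapsto \vz + \vw_2\, \phi(\vw_1^\top \vz + b)$ with $\phi \in \Phi$. \emph{Stage~A} uses $O(d)$ layers to implement the linear map $\vz \mapsto (\vu^\top \vz)\,\ve_1$ (placing the signature in coordinate~$1$ and zeroing the rest) by accumulating $\vu^\top \vz$ into coordinate~$1$ via $d$ rank-$1$ updates, then zeroing coordinates $2, \ldots, d$ with $d{-}1$ more updates; all updates use clipped-linear activations $\phi(t) = \max(-C, \min(t, C))$ (three pieces, in $\Phi$) acting as identity on the bounded input range. \emph{Stage~B} uses $2M$ layers: for each $s_k$, a pair of clipped-ramp activations in $\Phi$ combine to produce a narrow triangular bump of height~$1$ supported on $[s_k - \epsilon, s_k + \epsilon]$, multiplied by $\bigl(0, (\vv_k)_2, \ldots, (\vv_k)_d\bigr)^\top$, so only coordinates $2, \ldots, d$ are written. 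Choosing $\epsilon$ less than half the minimum gap between distinct $s_k$ guarantees that at most one bump fires per signature, and since each write has zero first coordinate, the signature $\vz_1 = s$ is preserved throughout stage~B and remains the key for every layer's indicator. \emph{Stage~C} uses $O(M)$ further bump-gated layers that add $\bigl((\vv_k)_1 - s_k\bigr) \ve_1$ whenever $\vz_1 \in (s_k - \epsilon, s_k + \epsilon)$, fixing the first coordinate; choosing $\epsilon$ small enough that no target $(\vv_k)_1$ lies inside any other bump's support prevents cascading activations.

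For the out-of-range case $\mL \in \sG^+_\delta \setminus \wt{\sG}_\delta$, the signature $s$ lies outside $[t_l, t_r]$, so no stage-B or stage-C bump fires and the token arrives at the end of stage~C with $\vz = s\,\ve_1$. To collapse these to $\vzero_d$, I would prepend stages~B and~C with two clipping layers $\vz_1 \mapsto \max(\vz_1, t_l - \mu)$ and $\vz_1 \mapsto \min(\vz_1, t_r + \mu)$, where small $\mu > 0$ is chosen so $\{t_l - \mu, t_r + \mu\}$ is disjoint from all $s_k$; this brings every out-of-range signature to one of two boundary values while leaving in-range signatures untouched. Then append $O(1)$ bump-gated layers clearing $\vz_1$ exactly at those two boundary values. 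The layer count totals $O(d) + 2M + O(M) + O(1) = O(n(1/\delta)^{dn}/n!)$, as claimed. The main obstacle is reconciling three tensions: each FF layer contributes only a \emph{rank-$1$} additive update; the lookup key is itself stored in the token and vulnerable to corruption by writes; and the $M$ target vectors are essentially arbitrary. The staged decomposition resolves these by isolating the key on a dedicated coordinate (stage~A), performing the bulk of writes orthogonally to that coordinate (stage~B), and deferring key-coordinate rewrites to a final pass with narrow, non-overlapping indicator supports (stage~C).
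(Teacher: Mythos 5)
Your proposal is correct and rests on the same core idea as the paper's proof: by Lemma~\ref{lemma:contextmap}, the scalar key $\vu^T g_{\rm c}(\mL)_{:,j}$ determines the pair $(\mL, j)$ up to column permutation (and permutation equivariance of $\overline f$ makes the target well-defined as a function of that key), so $g_{\rm v}$ is a lookup table realized by one bump-gated feed-forward update per distinct in-range key, giving $O(n(1/\delta)^{dn}/n!)$ layers. The differences are in the engineering. For out-of-range tokens the paper subtracts a large constant $M+1$ from every entry whenever $\vu^T\mZ \notin [t_l, t_r]$ and then clips negative entries to zero in $d$ further layers, whereas you clip the key to two boundary values and clear there; both work. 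More substantively, the paper writes the entire target column in a single gated update $\mZ \mapsto \mZ + \big((\mA_{\overline\mL})_{:,j} - g_{\rm c}(\overline\mL)_{:,j}\big)\,\phi(\vu^T\mZ - \vu^T g_{\rm c}(\overline\mL)_{:,j}\vone_n^T)$; note that a rank-one update $\vw_2\,\phi(\vw_1^T\mZ + b\vone_n^T)$ with arbitrary $\vw_2 \in \reals^d$ already deposits a full $d$-dimensional vector into each gated column, so your three-stage split (key isolation, writes orthogonal to the key, deferred key rewrite) is not forced by the layer's rank-one form, and a single three-piece rectangular bump in $\Phi$ replaces your two-layer triangular bump. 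What your staging does buy is an explicit defense against key corruption: in the paper's construction a rewritten column acquires a new value of $\vu^T(\cdot)$ that could in principle fall into a later bump window, and the paper leaves implicit the fact that $|\vu^T(\mA_\mL)_{:,j}|$ is bounded by a constant depending on $f$ while every bump center is at least $t_l = \delta^{-2nd+1}(\delta^{-d}-1)$, so no later bump can fire for $\delta$ small. Your stage~C still needs this same separation (a target first coordinate must not land in another key's bump support), so both constructions share that implicit requirement; yours is otherwise more robust, at the cost of a constant factor in depth.
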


\vspace*{-10pt}
\subsection{Tightness of constructions}
\label{sec:tightness}
\vspace*{-5pt}
We showed in this section that Theorem~\ref{thm:univapprox} requires $O(n(1/\delta)^{dn}/n!)$ Transformer blocks for approximation, where $\delta$ is the width of the cubes.
Each transformer block is of constant width, so it has $O(d)$ parameters;
this means that the total number of parameters is $O(dn(1/\delta)^{dn}/n!)$.
We note that this exponential dependence cannot be avoided in the worse case. If we assume continuity without any additional smoothness, quantizing the domain to cubes and approximating the function with constants require memorizing $(\text{output dim}) \times (\text{num cubes}) /n!$ real numbers, where the factor of $1/n!$ is due to permutation equivariance. Thus, Theorem~\ref{thm:univapprox} is optimal in the order of parameters.

If we compare with the residual network result \citep{lin2018resnet}, we can consider ``flattening'' $\mX$ into a $dn$-dimensional vector and fitting the function. The proof technique in \citep{lin2018resnet} requires $O((1/\delta)^{dn})$ layers, where each layer has $O(dn)$ parameters: the total parameter requirement is $O(dn (1/\delta)^{dn})$. This shows that Transformers can approximate permutation equivariant functions in a more efficient way than residual networks.

In Section~\ref{sec:proof_anyseq2seq}, our proof of Theorem~\ref{thm:anyseq2seq} shows that we require $O(n(1/\delta)^{dn})$ layers to approximate continuous (not permutation equivariant) sequence-to-sequence functions. As seen from the argument above, this construction is also optimal in the order of parameters.

\vspace*{-6pt}
\section{Discussion and Experiments}\label{sec:discussion}
\vspace*{-4pt}
As detailed in Section~\ref{sec:different-roles}, the ability of the self-attention layers to compute contextual mappings plays a crucial role in the universal approximation property. Interestingly, our analysis shows that replacing the dot-product attention in Transformers with any other component capable of computing contextual mappings should preserve this universal approximation property. This leads naturally to questions about the alternative architectures that realize certain kinds of contextual mappings at different computational and memory costs. 
We explore and discuss some examples of such alternatives in this section.
Our preliminary empirical study demonstrates their practical utility.


\vspace*{-5pt}
\subsection{Bi-linear projection}
\vspace*{-3pt}
\label{sec:bi-linear}
Given token embeddings $\mX$ as input, the bi-linear projection layer computes the following update.
\begin{equation*}
    {\rm BProj}(\mX) = \mX + \mW_O \cdot \mX \cdot \mW_P.
\end{equation*}
The bi-linear projection layer \citep{gong2013learning} is motivated from the ability of random (Gaussian) matrices to map sparse differences to dense vectors~\citep{ailon2009fast}. If there are two input contexts $\mX_1$ and $\mX_2$ that differ in one token, their difference $\mX_1-\mX_2$ is sparse; however, after random projection, the difference $(\mX_1-\mX_2)\mW_P$ will be dense, and the numbers are distinct with high probability, implementing a form ``pair-wise contextual mapping,''\footnote{This guarantee only holds for a finite set (can be exponential in $n$) of fixed vectors in $\reals^n$.} although different from the contextual mapping in Definition~\ref{def:context-mapping}.


This layer advantageously incurs smaller number of matrix multiplications as compared to the dot-product attention. That said, the number of parameters in this layer depend on the sequence length, making it harder to reuse the model across tasks with different input sequence lengths. Moreover, the weights used to compute the contextual embeddings ($\mW_P$) are independent of the inputs ($\mX$), whereas in self-attention the weights $(\sfmx [(\mW_K^i \mX)^T \mW_Q^i \mX])$ depend on $\mX$. The first drawback can be addressed by replacing the linear projection with a depth-wise separable convolution layer, which is discussed in the next subsection.

\vspace*{-2pt}
\subsection{Depth-wise separable convolutions}
\vspace*{-3pt}
\label{sec:d-conv}
\begin{figure}[t]
\centering
    \begin{subfigure}[b]{0.48\textwidth}
	\includegraphics[width=\textwidth]{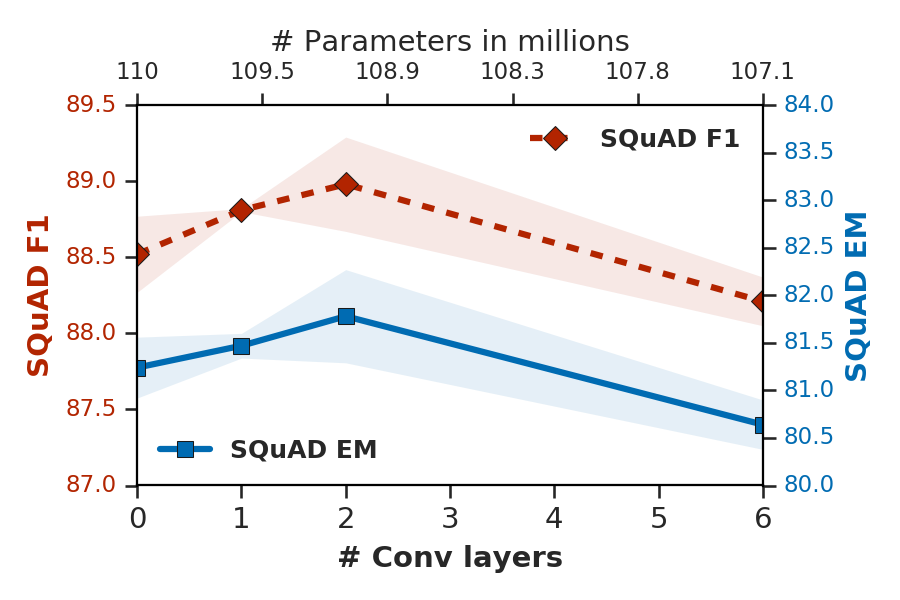}
	\caption{SQuAD}
	\label{fig:squad}
	\end{subfigure}
	\begin{subfigure}[b]{0.48\textwidth}
	\includegraphics[width=\textwidth]{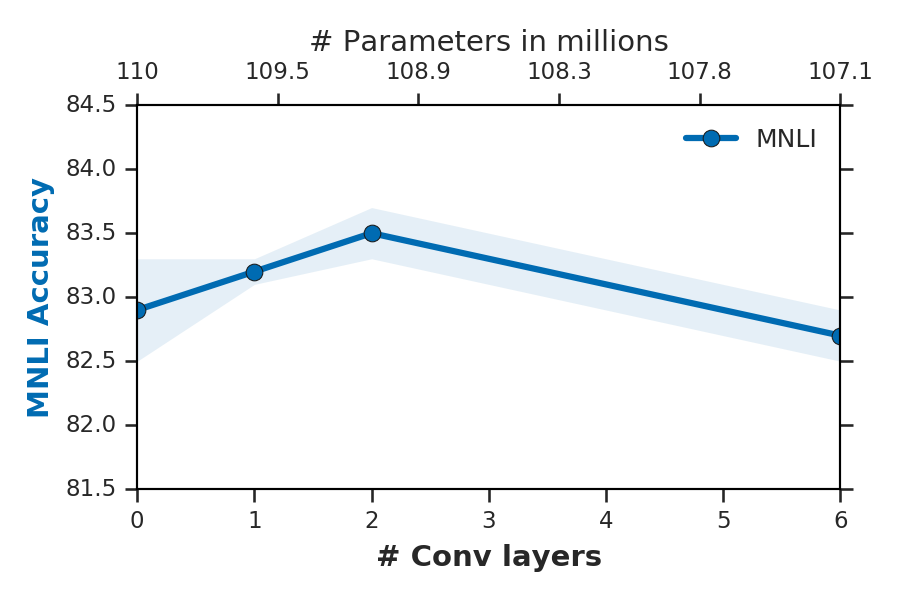}
	\caption{MNLI}
	\label{fig:mnli}
	\end{subfigure}
	\vspace{-1pt}
	\caption{Performance of hybrid models constructed by first taking $\BB$, a 12 layer Transformer model, and replacing the self-attention layers with depth-wise separable convolution layers, in a varying number of the Transformer blocks closer to the input. Surprisingly, replacing $1$ or $2$ self-attention layers with convolutions improves the performance, while replacing more hurts the performance. This suggests both that Transformers have functionality beyond just computing contextual mappings, and having simpler layers to realize contextual mapping can aid Transformers.}
  \label{fig:conv}
\end{figure}

A depth-wise convolution layer \citep{sifre2014rigid,chollet2017xception,kaiser2017depthwise} involves convolving each dimension of $\mX$ with a corresponding convolution filter of size $k$: \begin{equation*}
    {\rm SepConv}(\mX) = \mX + \mW_O \left(\mX \ast \mW_C\right),
\end{equation*}
where $\mW_C \in \reals^{d \times k}$ and $\left(\mX \ast \mW_C\right)_{i,:} := \mX_{i,:} \ast (\mW_C)_{i,:}$. Unlike bi-linear projection, this layer can be used across tasks with different input sequence lengths as the number of parameters are independent of the sequence length. While a single layer is unable to compute contextual mappings when the filter size is small, stacking multiple such layers can potentially provide a cheaper way to compute contextual mappings. In fact, based on depth-wise separable convolutions, \citet{wu2019pay} proposed a light-weight dynamic convolution architecture that performs competitively with Transformers on machine translation. 


\vspace*{-5pt}
\subsection{Experiments}
\vspace*{-3pt}

We now present our experiments with these other architectures, with the goal of understanding the extent to which computing contextual mappings can capture the performance of Transformers. As discussed earlier, ${\rm BProj}$ and ${\rm SepConv}$ do \emph{not} implement contextual mappings (cf.~Definition~\ref{def:context-mapping}), so we do not expect that either ${\rm BProj}$ or ${\rm SepConv}$ based models to have the same performance as the expensive Transformers. These models do not use input dependent weights to compute attention, and hence have weaker representation power. Instead, our goal is to see if we can use these cheaper layers to replace (some of) the expensive self-attention layers.


We follow the experimental setting from \citet{devlin2018bert} to train the Transformers, with the masked language model pre-training followed by a task specific fine-tuning, and work with a $12$ layer architecture based on $\BB$. We present our results on a question answering task (SQuAD) \citep{rajpurkar2016squad} and a sentence entailment task (MNLI) \citep{mnli}. In our first set of experiments we train models that employ ${\rm BProj}$ and ${\rm SepConv}$ layers, instead of the self-attention layer in eq.\eqref{eq:attn}. We notice that, as expected, these simpler models have weaker performance than the self-attention layer. See Table \ref{table:large_batch} in Section \ref{sec:appx_experiments} for a comparison of these models on MNLI.

Next, we swap a varying number of the first few self-attention layers in $\BB$ with ${\rm SepConv}$, implemented with filter reuse across dimensions \citep{wu2019pay}\footnote{We refer to Section \ref{sec:appx_experiments} for a complete description of the setup.}. Fig.~\ref{fig:conv} illustrates the performance of these hybrid models.  Interestingly, models with $1$ or $2$ convolution layers and rest the self-attention layers, perform better than models with only the self-attention layers.
Note that, replacing self-attention layer with ${\rm SepConv}$ also reduces the computational cost and the number of parameters. 
One explanation we have is that the first few attention layers tend to attend broadly to the whole sequence (as empirically observed in \citep{clark2019does}), and the cheaper convolution layers can perform this job more efficiently. A detailed evaluation of such hybrid architectures will be interesting future research.


Our experiments also call for a deeper understanding of the exact nature of the embeddings computed by practical attention models.
Since Transformers in practice have fixed depth, we believe that they might not be able to exactly implement contextual mappings as we defined in Definition~\ref{def:context-mapping}. 
However, there is some preliminary empirical evidence that Transformers do implement some sort of ``contextual mappings.'' For example,
Fig.~4 of \citet{coenen2019visualizing} presents visualizations of
embeddings of a single word in different contexts (sentences). They experimentally notice that Transformers, in addition to computing contextual mappings, also map a word into semantic clusters. Formalizing and evaluating this property of Transformers is an interesting direction for future work. We again note that \citet{wu2019pay} have proposed an alternative way to compute such embeddings based on dynamic convolution layers. Evaluating the mappings computed by these models should shed more light on the workings of attention models and inspire efficient and better performing architectures.


\bibliography{iclr2020_conference}
\bibliographystyle{iclr2020_conference}

\newpage
\appendix





\section{Proof of Claim~\ref{claim:equivariance}}
\label{sec:proof_equivariance}
Suppose $\mX \mP$ was given as input, where $\mP$ is a permutation matrix.
First note that 
\begin{equation*}
(\mW_K^i \mX \mP)^T (\mW_Q^i \mX \mP) = \mP^T (\mW_K^i \mX)^T (\mW_Q^i \mX) \mP    
\end{equation*}
After the softmax operation, we get 
\begin{equation*}
\sfmx[\mP^T (\mW_K^i \mX)^T (\mW_Q^i \mX) \mP] = \mP^T \sfmx[ (\mW_K^i \mX)^T (\mW_Q^i \mX) ] \mP.
\end{equation*}
Then,
\begin{align*}
    \attn(\mX \mP) &= \mX \mP + \sum_{i=1}^h \mW_O^i (\mW_V^i \mX \mP) \cdot \mP^T \sfmx [ (\mW_K^i \mX)^T (\mW_Q^i \mX) ] \mP = \attn(\mX) \mP,
\end{align*}
where we used $\mP \mP^T = \mI$. Permutation equivariance of the token-wise feed-forward layer can be shown similarly:
\begin{align*}
    \ff(\mX \mP) &= \attn(\mX)\mP + \mW_2 \cdot \relu (\mW_1 \cdot \attn(\mX)\mP + \vb_1 \vone_n^T \mP) + \vb_2 \vone_n^T \mP\\
    &= \attn(\mX)\mP + \mW_2 \cdot \relu (\mW_1 \cdot \attn(\mX) + \vb_1 \vone_n^T) \mP + \vb_2 \vone_n^T \mP = \ff(\mX)\mP,
\end{align*}
where $\relu(\mX \mP) = \relu(\mX) \mP$ was used.
This analysis shows that the function class $\mc T^{h,m,r}(\cdot)$ is restricted to permutation equivariant functions.


\section{Proof details of Theorem \ref{thm:univapprox}}\label{sec:proof_main}
We first define some additional notation.
For $a, b \in \naturals$ where $a \leq b$, let $[a] = \{1, \dots, a\}$ and $[a:b] = \{a, a+1, \dots, b-1, b\}$.
For $a, b, c \in \reals$ where $b-a > 0$ is an integer multiple of $c > 0$, we write $[a:c:b] \defeq \{a, a+c, a+2c, \dots, b-c, b\}$.

\subsection{Approximating $\mc F_{\rm PE}$ with $\overline{\mc F}_{\rm PE}(\delta)$}
\label{appen:thm2-part1}

\begin{lemma}
\label{lemma:part1}
For any given $f \in \mc F_{\rm PE}$ and $1 \leq p < \infty$, one can find a $\delta^* > 0$ such that $\exists$ $\overline f \in \overline {\mc F}_{\rm PE} (\delta^*)$ which satisfies $\funcdist_p(f, \overline f) \leq \epsilon/3$.
\end{lemma}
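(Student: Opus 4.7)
The plan is to approximate $f$ by the natural piecewise-constant sample that takes the value $f(\mL)$ on each cube $\sS_{\mL}$, and then verify that this sample lies in $\overline{\mc F}_{\rm PE}(\delta^*)$. This is the standard simple-function approximation for a uniformly continuous target, with the one nonstandard requirement that the approximant be permutation equivariant.

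First I would reduce to the case $\mathrm{supp}(f) \subset [0,1]^{d \times n}$ by rescaling and translating by a column-constant matrix $\vc \vone_n^T$; both operations commute with column permutations, so they preserve membership in $\mc F_{\rm PE}$. Then, because $f$ is continuous with compact support, it is uniformly continuous on $\reals^{d \times n}$, and I would pick $\delta^* > 0$ small enough that $\|\mX - \mY\|_\infty < \delta^*$ forces $\|f(\mX) - f(\mY)\|_p < \epsilon/3$. With this $\delta^*$ in hand, I define
\[
\overline f(\mX) \;:=\; \sum_{\mL \in \sG_{\delta^*}} f(\mL)\, \indic{\mX \in \sS_\mL},
\]
i.e.\ I take $\mA_\mL := f(\mL)$ in the definition of $\overline{\mc F}_{\rm PE}(\delta^*)$.

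Next I have to verify the two required properties. For permutation equivariance, the key observation will be that the cube family is permutation-covariant: for any permutation matrix $\mP$, $\sS_{\mL \mP^T} = \{\mX : \mX\mP \in \sS_\mL\}$, and the grid $\sG_{\delta^*}$ is closed under $\mL \mapsto \mL\mP^T$. Re-indexing the sum by $\mM = \mL \mP$ and using $f(\mL)\mP = f(\mL\mP)$ then gives $\overline f(\mX)\mP = \overline f(\mX \mP)$, so $\overline f \in \overline{\mc F}_{\rm PE}(\delta^*)$. For the distance bound, I use that $f$ and $\overline f$ both vanish outside $[0,1]^{d\times n}$, that the cubes $\sS_\mL$ partition $[0,1)^{d \times n}$ up to a measure-zero boundary, and that on $\sS_\mL$ we have $\|\mX - \mL\|_\infty < \delta^*$; this yields
\[
\funcdist_p(f, \overline f)^p \;=\; \sum_{\mL \in \sG_{\delta^*}} \int_{\sS_\mL} \|f(\mX) - f(\mL)\|_p^p\, d\mX \;\leq\; (\epsilon/3)^p \cdot 1,
\]
since the cubes have total volume $1$, and the claim $\funcdist_p(f,\overline f) \leq \epsilon/3$ follows upon taking $p$-th roots.

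The main obstacle will be essentially bookkeeping: ensuring that the natural cube-wise sample actually lands in $\overline{\mc F}_{\rm PE}(\delta^*)$, not merely among general piecewise-constant functions. There is no substantial analytic content beyond the usual uniform-continuity argument; the one thing that must be checked with care is how column permutations act on the cube family, which is what singles out the choice $\mA_\mL = f(\mL)$ (rather than, say, the average of $f$ over $\sS_\mL$) and which makes equivariance of $\overline f$ an automatic consequence of equivariance of $f$.
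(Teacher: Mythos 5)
Your proposal is correct and takes essentially the same route as the paper's proof: uniform continuity of the compactly supported $f$ gives the pointwise bound $\|f(\mX)-\mA_\mL\|_p<\epsilon/3$ on each cube, and permutation equivariance of the piecewise-constant sample follows from the permutation-covariance of the grid and cube family together with equivariance of $f$. The only cosmetic difference is that the paper samples $f$ at the cube centers $\mC_\mL$ rather than at the corner points $\mL$; since both satisfy $\|\mX-\cdot\|_\infty<\delta^*$ for $\mX\in\sS_\mL$, the argument is unchanged.
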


\begin{proof}
Since $f: \reals^{d \times n} \to \reals^{d \times n}$ is a continuous function with compact support, the function is uniformly continuous. Since continuity is defined using entry-wise $\ell_p$ norm, and entry-wise $\ell_p$ norm is equivalent to entry-wise $\ell_\infty$ norm when the number of entries are finite, uniform continuity implies that
\begin{equation*}
    \forall \epsilon > 0, \exists \delta > 0 \text { such that } \forall \mX, \mY, \linf{\mX-\mY} < \delta \implies \norm{f(\mX)-f(\mY)}_p < \epsilon.
\end{equation*}
This means that given any $\epsilon/3>0$, we have such a $\delta>0$. Using this $\delta$, we can create a grid $\sG_{\delta}$ and corresponding cubes $\sS_\mL$, as described in the main text.
For any $\mL \in \sG_{\delta}$, we define $\mC_\mL \in \sS_\mL$ to be the center point of the cube $\sS_\mL$. Then, we can define a piece-wise constant approximation $\overline f(\mX) = \sum\nolimits_{\mL \in \sG_{\delta}} f(\mC_\mL) \indic{\mX \in \sS_\mL}$.
Note that, for any $\mX \in \sS_\mL$, we have $\linf{\mX - \mC_\mL} < \delta$, so by uniform continuity, we have $\norm{f(\mX) - \overline f(\mX)}_p = \norm{f(\mX) - f(\mC_\mL)}_p< \epsilon / 3$. 
This proves that $\funcdist_p(f, \overline f) < \epsilon/3$.

As for permutation equivariance, since $f$ is permutation equivariant, we have $f(\mC_\mL \mP) = f(\mC_\mL)\mP$ for any permutation matrix $\mP$. For any $\mX \in \sS_\mL$, we have $\mX \mP \in \sS_{\mL \mP}$, so
\begin{equation*}
    \overline f(\mX \mP)
    = f(\mC_{\mL \mP}) = f(\mC_\mL \mP) = f(\mC_\mL) \mP = \overline f(\mX) \mP.
\end{equation*}
Thus, the approximation $\overline f$ is also permutation equivariant. This proves the lemma.
\end{proof}

\subsection{Approximating $\overline{\mc T}^{2,1,1}$ with $\mc T^{2,1,4}$}
\label{appen:thm2-part3}

\begin{lemma}
\label{lemma:part3}
For each $\overline g \in \overline {\mc T}^{2,1,1}$ and $1 \leq p < \infty$, $\exists$ $g \in \mc T^{2,1,4}$ such that $\funcdist_p(\overline g,g) \leq \epsilon/3$.
\end{lemma}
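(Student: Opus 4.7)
The plan is to construct $g \in \mc T^{2,1,4}$ from $\overline g \in \overline{\mc T}^{2,1,1}$ by making two substitutions block-by-block: (i) replace every piece-wise linear activation $\phi \in \Phi$ in feed-forward layers by an equivalent ReLU subnetwork of width $4$, and (ii) replace every $\hdmx$ in attention layers by a $\sfmx$ whose pre-softmax input has been scaled by a large factor $\lambda > 0$. Substitution (i) is \emph{exact} and contributes no error, while (ii) contributes a per-layer error that vanishes as $\lambda \to \infty$.

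For (i), any $\phi \in \Phi$ is piece-wise linear with at most two breakpoints and at least one constant piece, so it admits a representation $\phi(t) = c_0 + \sum_{i=1}^{k} \alpha_i \relu(\beta_i t + \gamma_i)$ with $k \leq 4$ (in fact $k = 2$ suffices, since the constant-piece assumption lets each kink absorb a slope change without needing a separate linear term of the form $t = \relu(t) - \relu(-t)$). Given a modified feed-forward layer with weights $(\mW_1^{\rm mod}, \vb_1^{\rm mod}, \mW_2^{\rm mod}, \vb_2^{\rm mod})$, I take the $i$-th row of $\mW_1$ to be $\beta_i \mW_1^{\rm mod}$, the $i$-th entry of $\vb_1$ to be $\beta_i \vb_1^{\rm mod} + \gamma_i$, the $i$-th column of $\mW_2$ to be $\alpha_i \mW_2^{\rm mod}$, and set $\vb_2 = c_0 \mW_2^{\rm mod} + \vb_2^{\rm mod}$. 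A direct computation shows the resulting original feed-forward layer produces the same output as the modified one on every input.

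For (ii), scaling $\mW_Q^i \mapsto \lambda \mW_Q^i$ multiplies the argument of $\sfmx$ by $\lambda$, and a standard estimate gives $\|\sfmx[\lambda \vz] - \hdmx[\vz]\|_\infty \leq (n-1) e^{-\lambda \gamma}$ for any column $\vz$ whose maximum exceeds the next-distinct value by $\gamma > 0$. Ties are handled automatically: both $\hdmx$ and $\lim_{\lambda \to \infty} \sfmx[\lambda \cdot]$ distribute uniform mass over the set of maximizers. To promote this per-layer closeness into the target bound $\funcdist_p(\overline g, g) \leq \epsilon/3$, I use that $\overline g$ consists of finitely many blocks with bounded weights acting on a bounded input domain (since $\overline g$ approximates a piecewise-constant $\overline f \in \overline{\mc F}_{\rm PE}(\delta)$ with compact support), that each block is Lipschitz on bounded sets, and hence that softmax errors propagate through the network by at most a multiplicative constant per layer; integrating over the compact support then converts the uniform closeness into the $\funcdist_p$ bound.

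The delicate point, and the main obstacle, is uniformity of the gap $\gamma$ along the computational trajectory of $\overline g$: since $\hdmx$ is discontinuous, small softmax errors in early attention layers could in principle push the inputs of later attention layers across the argmax boundary. This is resolved by appealing to the explicit structure of $\overline g$ from the proof of Proposition~\ref{prop:part2}. The initial feed-forward layers quantize any input to the grid $\sG^+_\delta$ (Lemma~\ref{lemma:quantize}), and the selective-shift attention layers in the contextual-mapping construction (Lemma~\ref{lemma:contextmap}) then act on a finite set of matrices, pinning all pre-hardmax values to a finite list. Consequently, the minimum gap $\gamma_{\min}$ along the modified trajectory is a strictly positive constant independent of the input, and choosing $\lambda$ large enough that the cumulative softmax error stays below $\gamma_{\min}/2$ preserves every downstream argmax and closes the induction.
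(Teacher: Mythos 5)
There is a genuine gap in your step (i). You claim the replacement of $\phi \in \Phi$ by a width-$4$ (indeed width-$2$) $\relu$ subnetwork is \emph{exact}, but the class $\Phi$ as used in the construction of $\overline g$ contains \emph{discontinuous} piecewise linear functions: the quantization activations of Lemma~\ref{lemma:quantize} (e.g.\ $\phi(t) = -t$ on $[0,\delta)$ and $0$ elsewhere) and the indicator-type activations of Lemma~\ref{lemma:memorize} (e.g.\ $\phi(t)=1$ on $[-\delta/2,\delta/2)$ and $0$ elsewhere) all have jump discontinuities. A finite linear combination $c_0 + \sum_i \alpha_i \relu(\beta_i t + \gamma_i)$ is continuous, so no such exact representation exists; your identity $\phi(t) = c_0 + \sum_{i=1}^{k}\alpha_i\relu(\beta_i t+\gamma_i)$ with $k\le 2$ is valid only for the continuous members of $\Phi$. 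The paper instead builds a four-$\relu$ surrogate $\wt\phi$ that agrees with $\phi$ outside intervals of width $\epsilon$ around each breakpoint (using a steep ramp of slope $O(1/\epsilon)$ to emulate each jump, which is why four hidden units rather than two are needed), and lets $\epsilon\to 0$ to control the resulting error in $\funcdist_p$. Your argument as written silently assumes continuity and therefore does not cover the activations that $\overline g$ actually uses; moreover, once the feed-forward substitution is only approximate, the same error-propagation concern you carefully raise for the attention layers (small perturbations crossing the discontinuities of later layers) applies to the feed-forward layers as well and must be folded into your induction.

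Your step (ii) and the discussion of the uniform gap $\gamma_{\min}$ along the computational trajectory match the paper's approach of scaling the query weights so that $\sfmx[\lambda\,\cdot\,]\to\hdmx[\cdot\,]$; in fact your treatment of why the per-layer errors do not destroy downstream argmaxes (by appealing to the finite set of pre-hardmax values produced by the quantized inputs) is more explicit than the paper's, which asserts the convergence without tracking propagation. To repair the proof, replace the exactness claim in (i) with the paper's $\epsilon$-ramp construction (or any four-$\relu$ approximant that differs from $\phi$ only on a set of arbitrarily small measure along the trajectory) and extend your uniform-gap argument to cover the feed-forward breakpoints as well as the attention argmax boundaries.
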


\begin{proof}
Recall that $T^{h,m,r}$ refers to the class of functions representable with composition of Transformer blocks with $h$ heads of size $m$ in self-attention layers and $r$ hidden nodes in feed-forward layers. The same notation holds for the modified Transformers $\overline {\mc T}^{h,m,r}$.

Note that the softmax operator on a matrix $\mA$ can be made arbitrarily close to hardmax by scaling up $\mA$. That is,
\begin{equation*}
    \sfmx[\lambda \mA] \rightarrow \hdmx[\mA] ~~ \text{ as } \lambda \rightarrow \infty.
\end{equation*}
This means that by scaling up parameters inside $\sfmx$, we can approximate $\hdmx$ arbitrarily closely. Thus, the modified self-attention layers can be approximated with the original self-attention layers of the same number of heads $h$ and head size $m$.

Also, any arbitrary (possibly discontinuous) piecewise linear function $\phi \in \Phi$ can be approximated arbitrarily closely by four $\relu$'s.
Note that $\phi \in \Phi$ as at most three pieces, and at least one of the pieces is constant.
For example, consider the following function $\phi \in \Phi$:
\begin{equation*}
    \phi(t) = 
    \begin{cases}
        b_1 & \text { if } t < c_1,\\
        a_2 t + b_2 & \text { if } c_1 \leq t < c_2,\\
        a_3 t + b_3 & \text { if } c_2 \leq t.
    \end{cases}
\end{equation*}
This function can be approximated by four $\relu$'s, as claimed by the lemma:
\begin{align*}
    \wt \phi(t) =~&
    b_1 + \frac{a_2 c_1 + b_2 - b_1}{\epsilon} \relu(t-c_1+\epsilon)
    + \left ( a_2 - \frac{a_2 c_1 + b_2 - b_1}{\epsilon} \right ) \relu(t-c_1) \\
    &+ \left ( \frac{a_3 c_2 + b_3 - a_2(c_2 - \epsilon) - b_2}{\epsilon} - a_2 \right ) \relu(t - c_2 + \epsilon)\\
    &+ \left ( a_3 - \frac{a_3 c_2 + b_3 - a_2(c_2 - \epsilon) - b_2}{\epsilon} \right ) \relu(t - c_2)\\
    =~&
    \begin{cases}
        b_1 & \text { if } t < c_1-\epsilon,\\
        \frac{a_2 c_1 + b_2 - b_1}{\epsilon} (t - c_1) + a_2c_1 + b_2 & \text { if } c_1-\epsilon \leq t < c_1,\\
        a_2 t + b_2 & \text { if } c_1 \leq t < c_2-\epsilon,\\
        \frac{a_3 c_2 + b_3 - a_2(c_2 - \epsilon) - b_2}{\epsilon} (t - c_2) + a_3 c_2 + b_3 & \text { if } c_2-\epsilon \leq t < c_2,\\
        a_3 t + b_3 & \text { if } c_2 \leq t.
    \end{cases}
\end{align*}
Also, as we make $\epsilon \rightarrow 0$, we can approximate $\phi$ as closely as possible using $\wt \phi$. The cases where the second or third piece is constant can be shown similarly. This means that the modified feed-forward layers (whose activation is $\phi \in \Phi$) with single hidden node can be approximated with the original feed-forward layers ($\relu$) with four hidden nodes.

Thus, given any $\overline{g} \in \overline{\mc T}^{2,1,1}$, there exists a function $g \in {\mc T}^{2,1,4}$ arbitrarily close to $\overline{g}$, by appropriately choosing the parameters to be large enough. This finishes the proof.
\end{proof}

\subsection{Finishing proof of Proposition~\ref{prop:part2}}
\label{sec:appx_proof_prop}


As we have already discussed in Section~\ref{sec:different-roles}, we establish Proposition~\ref{prop:part2} in three steps: 
\begin{enumerate}
\item Given an input $\mX$, a group of feed-forward layers in the modified Transformer network can quantize $\mX$ to an element $\mL$ on the extended grid $\sG^+_{\delta} \defeq \{ -\delta^{-nd}, 0, \delta, \dots, 1-\delta \}^{d \times n}$.
\item Next, a group of self-attention layers in the modified Transformer network can take the input $\mL$ and produce desirable {\em contextual mappings} $q(\mL)$ such that, for $\mL$ and $\tilde{\mL}$, that are not permutation of each other, all the elements in $q(\mL)$ and $q(\tilde{\mL})$ are distinct.
\item Finally, a group of feed-forward layers in the modified Transformer network can map elements of the contextual embedding $q(\mL)$ to the desirable values, i.e., the output of $\overline{f} \in \overline{\mc F}_{\rm PE}$ on the input $\mX$.
\end{enumerate}
These steps are formally stated in Lemmas~\ref{lemma:quantize}, \ref{lemma:contextmap}, and \ref{lemma:memorize} in the main text. We present the proofs of these lemmas in the subsequent sections.

With the results established in these lemmas, we are now equipped with all the tools necessary to complete the proof of Proposition~\ref{prop:part2}. Let us recall the functions $g_{\rm q}, g_{\rm c}$, and $g_{\rm v}$ from Lemma~\ref{lemma:quantize},~\ref{lemma:contextmap}, and~\ref{lemma:memorize}, respectively. We now show that the (modified) Transformer network $\overline{g} = g_{\rm v}\circ g_{\rm c} \circ g_{\rm q}$ approximates the underlying peicewise constant function $\overline{f} \in \overline{\mc F}_{\rm PE}$ over all points in its support except for a set of of measure $O(\delta^d)$.

Consider a point $\mX \in \sS_\mL \subset [0,1]^{d \times n}$, where $\mL \in \wt {\sG}_{\delta}$. By Lemma~\ref{lemma:quantize}, we have that $g_{\rm q}(\mX) = \mL$. Thus, it follows from Lemmas~\ref{lemma:contextmap} and \ref{lemma:memorize} that
$$
g_{\rm v}\circ g_{\rm c} \circ g_{\rm q}(\mX) = g_{\rm v}\circ g_{\rm c}(\mL) = \begin{bmatrix}
    g_{\rm v}^{\rm tkn} (g_{\rm c}(\mL)_{\cdot, 1}) &
    g_{\rm v}^{\rm tkn} (g_{\rm c}(\mL)_{\cdot, 2}) &
    \cdots &
    g_{\rm v}^{\rm tkn} (g_{\rm c}(\mL)_{\cdot, n})
    \end{bmatrix}
    =
    \mA_\mL.
$$
On the other hand, any point $\mX \in \bigcup_{\mL \in \sG_{\delta} \setminus \wt {\sG}_{\delta}} \sS_\mL \cup (\reals^{d\times n} \setminus [0,1]^{d \times n})$ is mapped by $g_{\rm q}$ to $\mL \in \sG^+_{\delta} \setminus \wt {\sG}_{\delta}$; as a result, we get $g_{\rm v}\circ g_{\rm c} \circ g_{\rm q}(\mX) = g_{\rm v} \circ g_{\rm c} (\mL) = \vzero$.

Therefore, we have $\overline{g}(\mX) = g_{\rm v} \circ g_{\rm c} \circ g_{\rm q}(\mX) = \mA_\mL = \overline f(\mX)$ for $\mX \in \bigcup_{\mL \in \wt {\sG}_{\delta}} \sS_\mL$, and $\vzero$ everywhere else. Recall that $\overline{f}$ has its compact support in $[0,1]^{d}$, thus bounded; i.e., there exists $B \geq 0$ such that $\norms{\overline f(\mX)}_p \leq B$. The modified Transformer network $\overline{g}$ takes the same value as $\overline{f}$ on all points in $[0,1]^{d}$ except for a set $\bigcup_{\mL \in \sG_{\delta} \setminus \wt {\sG}_{\delta}} \sS_\mL$ that has measure $O(\delta^d)$. This implies that $\funcdist_p(\overline{f}, \overline{g}) \leq (B^p \delta^d)^{1/p} = O(\delta^{d/p})$.

\subsection{Proof of Lemma~\ref{lemma:quantize}}
The proof strategy is simple; using $\frac{1}{\delta}+1$ token-wise feed-forward layers, we implement the quantization function $g_{\rm q}^{\rm ent}$ that works on the first row of the input. Then stack another $\frac{1}{\delta}+1$ layers that quantizes the second row, and so on.

Given input $\mX$, we first start by clipping $\mX_{1,:}$ in the set $(-\infty, 0) \cup [1, +\infty)$ and mapping the intervals to $-\delta^{-nd}$. This can be done by the following layer:
\begin{equation*}
    \mZ \mapsto \mZ + \ve^{(1)} \phi( (\ve^{(1)})^T \mZ ),
    ~~
    \phi(t) = 
    \begin{cases}
        -t - \delta^{-nd} & \text{ if } t < 0 \text{ or } t \geq 1,\\
        0 & \text{ otherwise.}
    \end{cases}
\end{equation*}

Next, add $1/\delta$ layers of the following form, for $k = 0, \delta, \dots, 1-\delta$.
\begin{equation*}
    \mZ \mapsto \mZ + \ve^{(1)} \phi( (\ve^{(1)})^T \mZ - k\delta \vone_n^T),
    ~~
    \phi(t) = 
    \begin{cases}
    0 & t < 0 \text{ or } t \geq \delta\\
    -t & 0 \leq t < \delta.
    \end{cases}
\end{equation*}
Each layer quantizes $\mX_{1,:}$ in $[k\delta, k\delta+\delta)$ to $k\delta$, without modifying other intervals.

Note that both $\phi$'s used in this construction are piecewise linear functions with three pieces, and at least one of them are constant. Thus, both $\phi$'s are in $\Phi$. We can repeat the same thing for the other rows, and at the end we will get a map from $\reals^{d \times n}$ to $\sG^+_{\delta}$.

\subsection{Proof of Lemma~\ref{lemma:contextmap}}
\label{sec:proof-contextmap}

\paragraph{Selective shift operation.}
Before starting the proof, we first describe the key component of our proof, which we refer to the \emph{selective shift operation}.
Consider the following function, which can be expressed with a multiplicative attention head, with head size $m = 1$ and hardmax $\hdmx$:
\begin{align*}
    \psi(\mZ; b_Q) &= \ve^{(1)} \vu^T \mZ \hdmx [ (\vu^T \mZ)^T (\vu^T \mZ - b_Q \vone_n^T) ]
\end{align*}
where $\vu \in \reals^d$ is a vector that we will choose later, and $\ve^{(1)} = (1, 0, 0, \dots, 0) \in \reals^d$ is the standard basis vector.

To see what this function computes, first consider the $j$-th column of the attention score matrix: $(\vu^T \mZ)^T (\vu^T \mZ_{:,j} - b_Q)$. Note that, if $\vu^T \mZ_{:,j}>b_Q$, $\hdmx$ will calculate $\argmax$ of $\vu^T \mZ$, whereas if $\vu^T \mZ_{:,j}<b_Q$, it will calculate $\argmin$. Therefore, the $(1,j)$-th entry of $\psi(\mZ; b_Q) \in \reals^{d \times n}$ can be written as
\begin{equation*}
\psi(\mZ; b_Q)_{1, j} = 
\vu^T \mZ \hdmx [ (\vu^T \mZ)^T (\vu^T \mZ_{:,j} - b_Q) ] =
\begin{cases}
    \max_k \vu^T \mZ_{:,k} & \text{ if } \vu^T \mZ_{:,j} > b_Q, \\
    \min_k \vu^T \mZ_{:,k} & \text{ if } \vu^T \mZ_{:,j} < b_Q,
\end{cases}
\end{equation*}
for $j \in [n]$. Note that due to $\ve^{(1)}$, all rows of $\psi(\mZ; b_Q)$ except the first row are zero.
From this observation, one can define a function parametrized by $b_Q$ and $b'_Q$, where $b_Q < b'_Q$, which consists of two attention heads:
\begin{align*}
    &\Psi(\mZ; b_Q, b'_Q) \defeq \psi(\mZ; b_Q) - \psi(\mZ; b'_Q),\\
    &\Psi(\mZ; b_Q, b'_Q)_{1, j} =
    \begin{cases}
        \max_k \vu^T \mZ_{:,k} - \min_k \vu^T \mZ_{:,k} & \text{ if } b_Q < \vu^T \mZ_{:,j} < b'_Q, \\
        0 & \text{ if } \vu^T \mZ_{:,j} < b_Q \text{ or } \vu^T \mZ_{:,j} > b'_Q.
    \end{cases}
\end{align*}
What this means is that, if we define an attention layer of the form $\mZ \mapsto \mZ + \Psi(\mZ;b_Q,b'_Q)$, then any column $\mZ_{:,j}$ satisfying $\vu^T \mZ_{:,j} \in (b_Q,b'_Q)$ is shifted up in its first coordinate $\mZ_{1,j}$ by $\max_k \vu^T \mZ_{:,k} - \min_k \vu^T \mZ_{:,k}$, \textbf{while all the other coordinates stay untouched}. We call this the selective shift operation, because we can choose $b_Q$ and $b'_Q$ to selectively shift certain entries of the input.

\paragraph{Bijective column id mapping.}
Recall that the input to this step is from the range of $g_{\rm q}$ (Lemma~\ref{lemma:quantize}), which is $\sG^+_{\delta} = \{ -\delta^{-nd}, 0, \delta, \dots, 1-\delta \}^{d \times n}$. Now consider $\mL \in \sG^+_{\delta}$ and $\vu = (1, \delta^{-1}, \delta^{-2}, \dots, \delta^{-d+1})$.

For any $j \in [n]$, it is easy to check two following facts:
\begin{enumerate}
\item If $\mL_{i,j} \neq -\delta^{-nd}$ for all $i \in [d]$, i.e., $\mL_{:,j} \in \{0, \delta, \dots, 1-\delta\}^d$, then $\vu^T \mL_{:,j} \in [0:\delta:\delta^{-d+1} - \delta]$, and the map $\mL_{:,j} \mapsto \vu^T \mL_{:,j}$ from $\{0, \delta, \dots, 1-\delta\}^d$ to $[0:\delta:\delta^{-d+1} - \delta]$ is a bijection.
\item If there exists $i \in [d]$ such that $\mL_{i,j} = -\delta^{-nd}$, then $\vu^T \mL_{:,j} \leq -\delta^{-nd} + \delta^{-d+1} - 1 < 0$.
\end{enumerate}
Therefore, one can say that $\vu^T \mL_{:,j}$ gives the ``column id'' for each possible value of $\mL_{:,j} \in \{0, \delta, \dots, 1-\delta\}^d$.

The rough idea of the construction is to apply the selective shift operation to each column id, by setting $\vu$ in the definition of $\Psi(\cdot)$ to be $(1, \delta^{-1}, \delta^{-2}, \dots, \delta^{-d+1})$ and choosing $b_Q = l - \delta/2$ and $b'_Q = l + \delta/2$ for each $l \in [0:\delta:\delta^{-d+1} - \delta]$.
More concretely, we stack $(1/\delta)^d$ attention layers, with attention parts $\delta^{-d} \Psi(\cdot;l-\delta/2,l+\delta/2)$ for each $l \in [0:\delta:\delta^{-d+1} - \delta]$, in increasing order of $l$.
After that, we add an extra single-head attention layer with attention part $\delta^{-(n+1)d} \psi(\cdot;0)$.

We now divide possible input values $\mL \in \sG^+_{\delta}$ into three disjoint categories, and show how these layers change the input values at the end of all the layers. Recall the hierarchy $\wt {\sG}_{\delta} \subset \sG_{\delta} \subset \sG^+_{\delta}$. The categories are defined as follows: 
\begin{enumerate}
    \item $\mL \in \wt {\sG}_{\delta}$. All entries are between $0$ and $1-\delta$, and all columns are unique.
    \item $\mL \in \sG_{\delta} \setminus \wt {\sG}_{\delta}$. All entries are between $0$ and $1-\delta$, but there are duplicate columns.
    \item $\mL \in \sG^+_{\delta} \setminus \sG_{\delta}$. The point has at least one entry that equals to $-\delta^{-nd}$.
\end{enumerate}

\subsubsection{Category 1}
\label{sec:proof-category1}
In Category~1, we have $\mL \in \wt {\sG}_{\delta}$.
Let $l_j \defeq \vu^T \mL_{:,j}$.
Due to permutation equivariance, we can assume without loss of generality that $l_j$'s are in increasing order: $l_1 < l_2 < \cdots < l_n$.
The first $(1/\delta)^d$ layers sweep the set $[0:\delta:\delta^{-d+1}-\delta]$ and apply selective shift operation on each element in the set. This means that selective shift operation will be applied to $l_1$ first, then $l_2$, and then $l_3$, and so on, regardless of the specific values of $l_j$'s.

\paragraph{First shift operation.}
In the first selective shift operation, the $(1,1)$-th entry of $\mL$ ($\emL_{1,1}$) is shifted by the operation, while the other entries are left untouched. The updated value $\wt \emL_{1,1}$ is
\begin{align*}
    \wt \emL_{1,1} 
    = \emL_{1,1} + \delta^{-d} (\max\nolimits_k \vu^T \mL_{:,k} - \min\nolimits_k \vu^T \mL_{:,k}) 
    = \emL_{1,1} + \delta^{-d} (l_n - l_1).
\end{align*}
Therefore, after the operation, the output of the layer is $\begin{bmatrix} \wt \mL_{:,1} & \mL_{:,2} & \dots & \mL_{:,n} \end{bmatrix}$, and the new value of the first column $\wt \mL_{:,1}$ results in
\begin{align*}
    \vu^T \wt \mL_{:,1}
    = \wt \emL_{1,1} + \sum_{i=2}^d \delta^{-i+1} \emL_{i,1}
    = \emL_{1,1} + \delta^{-d} (l_n - l_1) + \sum_{i=2}^d \delta^{-i+1} \emL_{i,1}
    = l_1 + \delta^{-d} (l_n - l_1).
\end{align*}
Let us denote the updated ``column id'' $\vu^T \wt \mL_{:,1}$ as $\wt l_1$.
We can show that $l_n < \wt l_1$, because
\begin{align*}
    \wt l_1 \defeq l_1 + \delta^{-d} (l_n - l_1)
    \geq 0 + \delta^{-d} \cdot \delta
    = \delta^{-d + 1} > l_n.
\end{align*}
Therefore, after updating, 
\begin{equation*}
    \max \vu^T \begin{bmatrix} \wt \mL_{:,1} & \mL_{:,2} & \dots & \mL_{:,n} \end{bmatrix}
    = \max \{ \wt l_1, l_2, \dots, l_n \} = \wt l_1,
\end{equation*}
and the new minimum is $l_2$.

\paragraph{Second shift operation.}
The second selective shift operation is applied to $l_2$, by which only one entry $\emL_{1,2}$ will be shifted. The updated value $\wt \emL_{1,2}$ is
\begin{equation*}
    \wt \emL_{1,2}
    = \emL_{1,2} + \delta^{-d} (\wt l_1 - l_2)
    = \emL_{1,2} + \delta^{-d} (l_1 - l_2) + \delta^{-2d} (l_n - l_1).
\end{equation*}
After updating, the new inner product of $\vu$ and $\wt \mL_{:,2}$ results in
\begin{equation*}
    \wt l_2 \defeq \vu^T \wt \mL_{:,2} = l_2 + \delta^{-d} (l_1 - l_2) + \delta^{-2d} (l_n - l_1).
\end{equation*}
We can show that $\wt l_1 < \wt l_2$, because
\begin{align*}
    &l_1 + \delta^{-d} (l_n - l_1) < l_2 + \delta^{-d} (l_1 - l_2) + \delta^{-2d} (l_n - l_1)\\
    \iff~&
    (\delta^{-d}-1) (l_2 - l_1) < \delta^{-d}(\delta^{-d}-1) (l_n - l_1),
\end{align*}
and the last inequality is true because $\delta^{-d} > 1$ and $l_n > l_2$.
Since we have $\wt l_1 < \wt l_2$, and the new maximum in $\vu^T \begin{bmatrix} \wt \mL_{:,1} & \wt \mL_{:,2} & \mL_{:,3} & \dots & \mL_{:,n} \end{bmatrix}$ is now $\wt l_2$, and the new minimum is $l_3$.

\paragraph{Repeating the process.}
More generally, we can repeat this process, and show that the $j$-th shift operation shifts $\emL_{1,j}$ by $\delta^{-d} (\wt l_{j-1} - l_j)$, and results in the new column id
\begin{equation*}
    \wt l_j \defeq 
    \vu^T \wt \mL_{:,j}
    =
    l_j + \sum_{k=1}^{j-1} \delta^{-kd}(l_{j-k}-l_{j-k+1}) + \delta^{-jd} (l_n-l_1).
\end{equation*}
In the general case, $\wt l_{j-1} < \wt l_j$ holds $j = [2:n]$, because
\begin{align*}
     &\wt l_{j-1} = l_{j-1} + \sum_{k=2}^{j-1} \delta^{-kd+d}(l_{j-k}-l_{j-k+1}) + \delta^{-(j-1)d} (l_n-l_1)\\
     &< \wt l_j = l_j + \sum_{k=1}^{j-1} \delta^{-kd}(l_{j-k}-l_{j-k+1}) + \delta^{-jd} (l_n-l_1)\\
     \iff~&
     \sum_{k=1}^{j-1} \delta^{-kd+d} (\delta^{-d}-1)(l_{j-k+1}-l_{j-k}) <
     \delta^{-(j-1)d}(\delta^{-d}-1)(l_n-l_1),
\end{align*}
and the last inequality holds because
\begin{equation*}
    \delta^{-(j-1)d}(l_n-l_1) > \delta^{-(j-1)d} \sum_{k=1}^{j-1}(l_{j-k+1} - l_{j-k}) > \sum_{k=1}^{j-1} \delta^{-kd+d} (l_{j-k+1}-l_{j-k}).
\end{equation*}
Therefore, after the $j$-th selective shift operation, $\wt l_j$ is the new maximum among $\{\wt l_1, \dots, \wt l_j, l_{j+1}, \dots, l_n\}$ and $l_{j+1}$ is the new minimum, which makes us possible to continue the process until the $n$-th operation.

\paragraph{After $n$ shift operations.}
As a result, after the whole sweep from $0$ to $\delta^{-d+1}-\delta$ by the first $(1/\delta)^d$ layers, a total of $n$ shift operations are applied, and the input $\mL$ is mapped to a new point $\wt \mL$, where $\vu^T \wt \mL = \begin{bmatrix}\wt l_1 & \wt l_2 & \dots & \wt l_n \end{bmatrix}$ and $\wt l_1 < \wt l_2 < \dots < \wt l_n$.

We can now prove the following technical lemma, whose proof is deferred to Appendix~\ref{sec:proof-sublemma:1}:
\begin{lemma}
\label{sublemma:1}
After $n$ shift operations, $\wt l_n = \vu^T \wt \mL_{:,n}$ satisfies the following bounds:
\begin{equation*}
    \delta^{-(n-1)d+1}(\delta^{-d} - 1) 
    \leq \wt l_n 
    \leq \delta^{-nd + 1}(\delta^{-d}-1) - \delta (\delta^{-d} - 1)^2.
\end{equation*}
Also, the map from $\begin{bmatrix} l_1 & l_2 & \cdots & l_n \end{bmatrix} \in [0:\delta:\delta^{-d+1}-\delta]$ (where $l_1 < l_2 < \dots < l_n$) to $\wt l_n$ is one-to-one.
\end{lemma}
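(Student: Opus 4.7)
The plan is to reduce everything to a single closed-form expression for $\wt l_n$ in terms of $(l_1, \ldots, l_n)$, from which the two bounds and the injectivity all follow. Starting from the recursion $\wt l_j = l_j + \sum_{k=1}^{j-1}\delta^{-kd}(l_{j-k} - l_{j-k+1}) + \delta^{-jd}(l_n - l_1)$ derived above, I would set $j=n$ and rewrite the final term via the telescoping identity $l_n - l_1 = \sum_{k=1}^{n-1}(l_{n-k+1} - l_{n-k})$. Combining like terms yields the manifestly non-negative sum
$$\wt l_n \;=\; l_n + \sum_{k=1}^{n-1}\bigl(\delta^{-nd} - \delta^{-kd}\bigr)(l_{n-k+1} - l_{n-k}),$$
in which the coefficient $\delta^{-nd} - \delta^{-kd}$ is largest at $k=1$ and each gap $l_{n-k+1} - l_{n-k}$ is an integer multiple of $\delta$ at least $\delta$.

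For the lower bound, I would keep only the $k=1$ summand and use $l_n - l_{n-1} \geq \delta$, giving $\wt l_n \geq \delta(\delta^{-nd} - \delta^{-d}) = \delta^{-nd+1} - \delta^{-d+1}$; since $n \geq 2$ and $\delta < 1$ imply $\delta^{-d+1} \leq \delta^{-(n-1)d+1}$, the claimed $\delta^{-(n-1)d+1}(\delta^{-d} - 1)$ follows. For the upper bound, I would replace every coefficient $\delta^{-nd} - \delta^{-kd}$ by its maximum $\delta^{-nd} - \delta^{-d}$ and pull out the resulting factor to get $\wt l_n \leq l_n + (\delta^{-nd} - \delta^{-d})(l_n - l_1)$; plugging in $l_1 \geq 0$ and $l_n \leq \delta^{-d+1} - \delta$ produces $(\delta^{-d+1} - \delta)(1 + \delta^{-nd} - \delta^{-d})$, which a brief algebraic manipulation identifies with $\delta^{-nd+1}(\delta^{-d} - 1) - \delta(\delta^{-d}-1)^2$.

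For the one-to-one claim, I would rewrite everything in base $a \defeq \delta^{-d}$. Setting $l_k = \delta m_k$ with integers $0 \leq m_1 < m_2 < \cdots < m_n \leq a-1$ and expanding the closed form gives
$$\wt l_n/\delta \;=\; (a^n - a + 1)\,m_n \;-\; a(a-1)\sum_{j=1}^{n-1} a^{n-1-j} m_j.$$
The second summand is divisible by $a$, so $\wt l_n/\delta \equiv m_n \pmod{a}$; since $m_n \in [0,a-1]$, this residue determines $m_n$ uniquely. Once $m_n$ is known, the integer $M \defeq \sum_{j=1}^{n-1} a^{n-1-j} m_j$ is determined as well, and because each digit $m_j$ lies in $[0,a-1]$ the standard base-$a$ expansion of $M$ uniquely recovers $(m_1, \ldots, m_{n-1})$. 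I expect this last step to be the main obstacle: the two bounds fall out mechanically from the closed form, whereas injectivity hinges on the non-obvious observation that the coefficient structure conspires to isolate $m_n$ in the residue modulo $a$ and to encode the remaining indices in the quotient's base-$a$ digits.
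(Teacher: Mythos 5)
Your proof is correct and follows essentially the same route as the paper's: the same closed-form expression for $\wt l_n$, the identical computation for the upper bound, a slightly stronger (hence sufficient) single-term estimate in place of the paper's telescoped lower bound, and an injectivity argument that is just a cleaner base-$\delta^{-d}$ digit-uniqueness rendering of the paper's ``coarse resolution'' induction, which likewise first pins down $l_n$ by a residue argument and then peels off the remaining coordinates one scale at a time. No gaps.
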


\paragraph{Global shifting by the last layer.}
As mentioned earlier, after this sweep, there is another attention layer with attention part $\delta^{-(n+1)d} \psi(\cdot;0)$.
Since $0<\wt l_1 < \cdots < \wt l_n$, what it does to $\wt \mL$ is that it adds $\delta^{-(n+1)d} \max_k \vu^T \wt \mL_{:,k} = \delta^{-(n+1)d} \wt l_n$ to each entry in the first row of $\wt \mL$. The output of this layer is defined to be the function $g_{\rm c}(\mL)$.

Now, in summary, for any $\mL \in \wt {\sG}_{\delta}$, $i \in [d]$, and $j \in [n]$, we have
\begin{equation*}
    g_{\rm c}(\mL)_{i,j} = 
    \begin{cases}
        \emL_{1,j} + \sum_{k=1}^{j-1} \delta^{-kd}(l_{j-k}-l_{j-k+1}) + \delta^{-jd} (l_n-l_1) + \delta^{-(n+1)d} \wt l_n & \text { if } i = 1,\\
        \emL_{i,j} & \text { if } i \in [2,d],
    \end{cases}
\end{equation*}
and for any $\mL \in \wt {\sG}_{\delta}$ and $j \in [n]$,
\begin{equation*}
    \vu^T g_{\rm c}(\mL)_{:,j} = \wt l_{j} + \delta^{-(n+1)d} \wt l_n.
\end{equation*}

\paragraph{Checking Properties~\ref{lemma:contextmap}.\ref{cond:1} and  \ref{lemma:contextmap}.\ref{cond:2}.}
Given this result so far, it is now left to check if the constructed network is really a permutation equivariant contextual mapping, i.e., if it satisfies Properties~\ref{lemma:contextmap}.\ref{cond:1} and \ref{lemma:contextmap}.\ref{cond:2} in Lemma~\ref{lemma:contextmap}.

First, for any $\mL \in \wt {\sG}_{\delta}$, Property~\ref{lemma:contextmap}.\ref{cond:1} holds because we already know $\wt l_1 < \wt l_2 < \dots < \wt l_n$, so they are all distinct.
As for Property~\ref{lemma:contextmap}.\ref{cond:2}, note that the upper bound on $\wt l_n$ from Lemma~\ref{sublemma:1} also holds for other $\wt l_j$'s, so
\begin{equation*}
\vu^T g_{\rm c}(\mL)_{:,j} \in [\delta^{-(n+1)d} \wt l_n, \delta^{-(n+1)d} \wt l_n + \delta^{-(n+1)d+1}),
\end{equation*}
for all $j \in [n]$. Now, from Lemma~\ref{sublemma:1}, two $\mL, \mL' \in \wt {\sG}_{\delta}$ (that are not permutations of each other) map to different $\wt l_n$ and $\wt l'_n$, and they differ at least by $\delta$. This means that two intervals $[\delta^{-(n+1)d} \wt l_n, \delta^{-(n+1)d} \wt l_n + \delta^{-(n+1)d+1})$ and $[\delta^{-(n+1)d} \wt l'_n, \delta^{-(n+1)d} \wt l'_n + \delta^{-(n+1)d+1})$ are guaranteed to be disjoint, so the entries of $\vu^T g_{\rm c}(\mL)$ and $\vu^T g_{\rm c}(\mL')$ are all distinct. This proves Property~\ref{lemma:contextmap}.\ref{cond:2}.

Therefore, we finished showing that the map $g_{\rm c}(\cdot)$ we constructed using $(1/\delta)^d+1$ attention layers implements a permutation equivariant contextual mapping on $\wt {\sG}_{\delta}$.

\paragraph{Checking Property~\ref{lemma:contextmap}.\ref{cond:3}.}
It is now left to check if the map $g_{\rm c}$ satisfies the other properties.
At this point, we can check Property~\ref{lemma:contextmap}.\ref{cond:3}. 
From 
$\vu^T g_{\rm c}(\mL)_{:,j} \in [\delta^{-(n+1)d} \wt l_n, \delta^{-(n+1)d} \wt l_n + \delta^{-(n+1)d+1})$ and Lemma~\ref{sublemma:1}, we can show that for any $\mL \in \wt {\sG}_{\delta}$, we have
\begin{align*}
    \delta^{-2nd+1}(\delta^{-d} - 1) 
    \leq \vu^T g_{\rm c}(\mL)_{:,j} 
    &< \delta^{-(n+1)d}(\delta^{-nd + 1}(\delta^{-d}-1) - \delta (\delta^{-d} - 1)^2) + \delta^{-(n+1)d+1}\\
    &\leq \delta^{-(2n+1)d+1}(\delta^{-d}-1),
\end{align*}
where we used $\delta^{-1} \geq 2$.
This proves that all $\vu^T g_{\rm c}(\mL)_{:,j}$ are between $t_l = \delta^{-2nd+1}(\delta^{-d} - 1)$ and $t_r = \delta^{-(2n+1)d+1}(\delta^{-d}-1)$.
For the remaining input points $\mL \in \sG^+_{\delta} \setminus \wt {\sG}_{\delta}$, we will check that $\vu^T g_{\rm c}(\mL)_{:,j}$ is outside the interval $[t_l , t_r]$ (Property~\ref{lemma:contextmap}.\ref{cond:4}).

\subsubsection{Category 2}
In Category~2, we have $\mL \in \sG_{\delta}\setminus \wt {\sG}_{\delta}$. Here, all entries are between $0$ and $1-\delta$, but there are duplicate columns.
Again, let $l_j \defeq \vu^T \mL_{:,j}$, and assume without loss of generality that $l_1 \leq l_2 \leq \dots \leq l_n$. 
For the input $\mL$ in Category~2, there exist some $j, j' \in [n]$, $j \neq j'$, such that $l_j = l_{j'}$.
This means that when the input passes through the attention layer $\delta^{-d}\Psi(\cdot; l_j - \delta/2, l_j + \delta/2)$, the selective shift operation for $l_j$ is applied to both $j$-th and $j'$-th columns; the two columns are coupled together.
More generally, suppose we have $n' < n$ distinct columns.

\paragraph{If $n'= 1$.}
In the extreme case of $n' = 1$, we have $\max_j l_j = \min_j l_j$, so the selective shift operation applied at $l_j$ does not shift the entry at all; therefore, at the end of the first $(1/\delta)^d$ attention layers, $\wt \mL = \mL$.

\paragraph{If $1 < n' \leq n - 1$.}
When $1 < n' \leq n-1$, let the $n'$ distinct values of $l_j$'s be $l'_1, \dots, l'_{n'}$.
The shift operation is applied $n'$ times, to $l'_1, \dots, l'_{n'}$, and shifts one or more entries at a time. After the first $(1/\delta)^d$ layers, the output $\wt \mL$ has $n'$ distinct $\wt l_j = \vu^T \wt \mL_{:,j}$, $0 \leq \wt l_1 \leq \wt l_2 \leq \dots \leq \wt l_n$, whose distinct values are the same as the numbers we get when we apply shift operations to a length-$n'$ sequence $\begin{bmatrix} l'_1 & \dots & l'_{n'} \end{bmatrix}$.
Then, applying the same calculations from Category~1 shows that
\begin{equation*}
    \wt l_n = 
    \vu^T \wt \mL_{:,n}
    =
    l'_{n'} + \sum_{k=1}^{n'-1} \delta^{-kd}(l'_{n'-k}-l'_{n'-k+1}) + \delta^{-n'd} (l'_{n'}-l'_1),
\end{equation*}
and it follows from the upper bound in Lemma~\ref{sublemma:1} that
\begin{equation*}
    \wt l_n
    \leq \delta^{-n'd + 1}(\delta^{-d}-1) - \delta (\delta^{-d} - 1)^2
    < \delta^{-(n-1)d + 1}(\delta^{-d}-1).
\end{equation*}
Note that the RHS matches the lower bound in Lemma~\ref{sublemma:1}.
This implies that the value of $\wt l_n$ calculated from the input $\mL \in \sG_{\delta}\setminus \wt {\sG}_{\delta}$ (Category~2) is always strictly less (by at least $\delta$) than that calculated from $\mL \in \wt {\sG}_{\delta}$ (Category~1).

\paragraph{Checking Property~\ref{lemma:contextmap}.\ref{cond:4}.}
After the global shifting by the last layer with attention part $\delta^{-(n+1)d} \psi(\cdot;0)$, we get the output $g_{\rm c}(\mL)$ which satisfies
\begin{align*}
    \vu^T g_{\rm c}(\mL)_{:,j} = \wt l_{j} + \delta^{-(n+1)d} \wt l_n
    &\leq (\delta^{-(n+1)d} + 1) (\delta^{-(n-1)d + 1}(\delta^{-d}-1) - \delta (\delta^{-d} - 1)^2)\\
    &< \delta^{-2nd+1}(\delta^{-d}-1) \eqdef t_l.
\end{align*}
where the RHS is a lower bound on possible values of $\vu^T g_{\rm c}(\mL)_{:,j}$ for $\mL \in \wt {\sG}_{\delta}$ (Category~1).
This means that the entries of $\vu^T g_{\rm c}(\mL)$ for Category~2 are outside $[t_l, t_r]$, which satisfies Property~\ref{lemma:contextmap}.\ref{cond:4}.

\subsubsection{Category~3}
In Category~3, we have $\mL \in \sG^+_{\delta} \setminus \sG_{\delta}$; the point $\mL$ has at least one entry that equals to $-\delta^{-nd}$.
Let $l_j \defeq \vu^T \mL_{:,j}$, and recall that whenever a column $\mL_{:,j}$ has an entry that equals to $-\delta^{-nd}$, we have $l_j = \vu^T \mL_{:,j} \leq -\delta^{-nd} + \delta^{-d+1} - 1 < 0$.
Assume without loss of generality that $l_1 \leq l_2 \leq \dots \leq l_n$.

Recall that the selective shift operation is applied to each element of $[0:\delta:\delta^{-d+1}-\delta]$, not to negative values.
In case of Category~3, we have $\min_k \vu^T \mL_{:,k} = l_1 < 0$, and $l_1$ never gets shifted upwards, so it remains as the minimum for the whole time.

\paragraph{If all $l_j$'s are negative.}
In case where all $l_j$'s are negative, selective shift operation never changes the input $\mL$, so we get $\wt \mL = \mL$. Since we have $\vu^T \wt \mL < \vzero_n^T$ (entry-wise), the last layer with attention part $\delta^{-(n+1)d} \psi(\cdot;0)$ adds $\delta^{-(n+1)d} \min_k \vu^T \wt \mL_{:,k} < 0$ to each entry in the first row of $\wt \mL$, further pushing it to the negative side. Therefore, the final output $g_{\rm c}(\mL)$ satisfies $\vu^T g_{\rm c}(\mL) < \vzero_n^T < t_l\vone_n^T$.

\paragraph{If not all $l_j$'s are negative.}
Now consider the case where at least one $l_j$ is positive. Let $i$ be the index that satisfies $l_{i-1} < 0 \leq l_{i}$. Then, selective shift operation does not affect $l_1, \dots, l_{i-1}$, and then it shifts $l_{i}$ by
\begin{equation*}
    \delta^{-d}(\max_k \vu^T \mL_{:,k} - \min_k \vu^T \mL_{:,k})
    = \delta^{-d} (l_n - l_1)
    \geq \delta^{-d} (0 + \delta^{-nd} - \delta^{-d+1} + 1)
    \geq \delta^{-(n+1)d+1},
\end{equation*}
where we used $\delta^{-1} \geq 2$ at the last inequality.
The next shift operations shift $l_{i+1}, \dots, l_n$ by even larger amount, so at the end of the first $(1/\delta)^d$ layers, we have 
$\delta^{-(n+1)d+1} \leq \wt l_{i} \leq \dots \leq \wt l_n$,
while $\wt l_j = l_j < 0$ for $j \in [i-1]$.

\paragraph{Shifts by the last layer.}
Here, the last layer with attention part $\delta^{-(n+1)d} \psi(\cdot;0)$ acts differently for negative and positive $\wt l_j$'s. 
For negative $\wt l_j$'s, it adds $\delta^{-(n+1)d} \min_k \wt l_{k} = \delta^{-(n+1)d} l_1 < 0$ to $\wt l_1, \dots, \wt l_{i-1}$, pushing them further to the negative side.
For positive $\wt l_j$'s, the layer adds $\delta^{-(n+1)d} \max_k \wt l_{k} = \delta^{-(n+1)d} \wt l_n \geq \delta^{-(2n+2)d+1}$ to $\wt l_{i}, \dots, \wt l_{n}$, so that they are all greater than or equal to $\delta^{-(2n+2)d+1}$. Note that $\delta^{-(2n+2)d+1} > t_r$.

\paragraph{Checking Property~\ref{lemma:contextmap}.\ref{cond:4}.}
Therefore, in both cases, we can see that the final output $g_{\rm c}(\mL)$ satisfies $\vu^T g_{\rm c}(\mL)_{:,j} \notin [t_l, t_r]$, for all $j \in [n]$.
This completes the verification of Property~\ref{lemma:contextmap}.\ref{cond:4}.

\subsubsection{Proof of Lemma~\ref{sublemma:1}}
\label{sec:proof-sublemma:1}
Proof of lower and upper bounds on $\wt l_n$ are straightforward:
\begin{align*}
    \wt l_n
    &\defeq l_n + \sum_{k=1}^{n-1} \delta^{-kd}(l_{n-k}-l_{n-k+1}) + \delta^{-nd} (l_n-l_1)\\
    &\geq \delta^{-(n-1)d}\sum_{k=1}^{n-1} (l_{n-k}-l_{n-k+1}) + \delta^{-nd} (l_n-l_1)
    = (\delta^{-nd}-\delta^{-(n-1)d}) (l_n-l_1)\\
    &\geq \delta^{-(n-1)d+1}(\delta^{-d} - 1),\\
    \wt l_n
    &\leq l_n + \delta^{-d} (l_1 - l_n) + \delta^{-nd}(l_n - l_1)
    \leq \delta^{-d+1} - \delta + (\delta^{-nd} - \delta^{-d}) (\delta^{-d+1} - \delta)\\
    &=\delta^{-nd + 1}(\delta^{-d}-1) - \delta (\delta^{-2d} - 2\delta^{-d} + 1)
    = \delta^{-nd + 1}(\delta^{-d}-1) - \delta (\delta^{-d} - 1)^2.
\end{align*}

For one-to-one property of the map, consider $\begin{bmatrix} l_1 & l_2 & \cdots & l_n \end{bmatrix}$ and $\begin{bmatrix} l'_1 & l'_2 & \cdots & l'_n \end{bmatrix}$ with increasing entries, which are mapped to $\wt l_n$ and $\wt l'_n$, respectively.
Suppose $\wt l_n = \wt l'_n$. By definition,
\begin{align*}
    \wt l_n - \wt l'_n
    =& (l_n - l'_n) + \delta^{-d} (l_{n-1} - l_n - l'_{n-1} + l'_n) 
    + \delta^{-2d} (l_{n-2} - l_{n-1} - l'_{n-2} + l'_{n-1}) + \dots
    \\&+ \delta^{-(n-1)d}(l_1-l_2-l'_1+l'_2) + \delta^{-nd} (l_n-l_1-l'_n+l'_1) = 0.
\end{align*}
Now assume for contradiction that $l_n \neq l'_n$. Then, we have $-\delta^{-d+1}+\delta \leq l_n - l'_n \leq \delta^{-d+1}-\delta$. However, the remaining terms have ``coarse resolution'', and they can never cancel $l_n - l'_n$ and make the sum zero, because for example, $\delta^{-d} (l_{n-1} - l_n - l'_{n-1} + l'_n)$ can only have values $0, \delta^{-d+1}, -\delta^{-d+1}, 2\delta^{-d+1}, -2\delta^{-d+1}, \dots$. Thus, $l_n = l'_n$ must hold and the first term must be zero.

Similarly, assume that $l_{n-1} \neq l'_{n-1}$. Then, the second term is in the interval $[-\delta^{-2d+1}+\delta^{-d+1}, \delta^{-2d+1}-\delta^{-d+1}]$.
Again, the remaining terms cannot cancel the second term, hence $l_{n-1} = l'_{n-1}$ must hold.
We can proceed this way, and show that $l_j = l'_j$ must hold for all $j \in [n]$, hence proving that the map is one-to-one.


\subsection{Proof of Lemma~\ref{lemma:memorize}}
Note that $|\sG^+_{\delta}| = (\frac{1}{\delta}+1)^{dn}$, so the image of $g_{\rm c}(\sG^+_{\delta})$ (from Lemma~\ref{lemma:contextmap}) has finite number of distinct real numbers. Let $M$ be the maximum over all these numbers. By construction of $g_{\rm c}$, we know that $M>0$.

To construct a function $g_{\rm v}^{\rm tkn}$ that satisfies the statement of the lemma, we first implement the second part: $g_{\rm v}^{\rm tkn}(g_{\rm c}(\mL)_{:,j}) = \vzero_d$ if $\mL \in \sG^+_{\delta} \setminus \wt \sG_{\delta}$.
Note from Lemma~\ref{lemma:contextmap} that, for any $\mL \in \wt \sG_{\delta}$, we have $\vu^T g_{\rm c}(\mL)_{:,j} \in [t_l, t_r]$ for all $j$, and for any $\mL \in \sG^+_{\delta} \setminus \wt \sG_{\delta}$, $\vu^T g_{\rm c}(\mL)_{:,j} \notin [t_l, t_r]$ for all $j$.
Using this, we add the following feed-forward layer:
\begin{equation*}
    \mZ \mapsto \mZ - (M+1) \vone_n \phi (\vu^T \mZ),
    ~~
    \phi(t)
    =
    \begin{cases}
        0 & \text{ if } t \in [t_l, t_r]\\
        1 & \text{ if } t \notin [t_l, t_r].
    \end{cases}
\end{equation*}
Input to this layer is $g_{\rm c}(\mL)$. If $\mL \in \wt \sG_{\delta}$, then $\phi(\vu^T g_{\rm c}(\mL)) = \vzero_n^T$, so the output stays the same as the input.
If $\mL \in \sG^+_{\delta} \setminus \wt \sG_{\delta}$, then $\phi(\vu^T g_{\rm c}(\mL)) = \vone_n^T$, so all the entries of the input are shifted by $-M-1$, and become strictly negative.

Recall that by definition of $\wt \sG_{\delta}$, all the entries of $g_{\rm c}(\mL)$ for $\mL \in \wt \sG_{\delta}$ are nonnegative. So the next thing to do is mapping all strictly negative entries to zero.
This can be done in a similar way as Lemma~\ref{lemma:quantize}.
For $i \in [d]$, add the following layer:
\begin{equation*}
    \mZ \mapsto \mZ + \ve^{(i)} \phi ((\ve^{(i)})^T \mZ),
    ~~
    \phi(t)
    =
    \begin{cases}
        -t & \text{ if } t < 0\\
        0 & \text{ if } t \geq 0.
    \end{cases}
\end{equation*}
After these $d$ layers, the output for $\mL \in \sG^+_{\delta} \setminus \wt \sG_{\delta}$ is a zero matrix, while the output for $\mL \in \wt \sG_{\delta}$ is $g_{\rm c}(\mL)$.

Now, it is left to map $g_{\rm c}(\mL)$ to $\mA_\mL$, for $\mL \in \wt \sG_{\delta}$. Up to permutation equivariance, each different context $\mL$ maps to $n$ unique numbers $\vu^T g_{\rm c}(\mL)$, which are at least $\delta$ apart from each other. The idea of value mapping is to map each unique number to the corresponding output column.

More precisely, choose any $\overline \mL \in \wt \sG_{\delta}$.
For each value of $\vu^T g_{\rm c}(\overline \mL)_{:,j}$, $j \in [n]$, we add one feed-forward layer
\begin{equation*}
    \mZ \mapsto \mZ + ((\mA_{\overline \mL})_{:,j} - g_{\rm c}({\overline \mL})_{:,j}) \phi (\vu^T \mZ - \vu^T g_{\rm c}({\overline \mL})_{:,j} \vone_n^T),
    ~~
    \phi(t) = 
    \begin{cases}
    0 & t < -\delta/2 \text{ or } t \geq \delta/2,\\
    1 & -\delta/2 \leq t < \delta/2.
    \end{cases}
\end{equation*}
If the input $\mZ$ is a zero matrix, which is the case for $\mL \in \sG^+_{\delta} \setminus \wt \sG_{\delta}$, $\vu^T \mZ = \vzero_n^T$. Since $t_l$ is much larger than $0$, activation is all zero. Thus, zero input matrix remains the same at the output.

If the input $\mZ$ is $g_{\rm c}(\mL)$, where $\mL \in \wt \sG_{\delta}$ is not a permutation of $\overline \mL$, then 
\begin{equation*}
    \phi(\vu^T g_{\rm c}(\mL) - \vu^T g_{\rm c}({\overline \mL})_{:,j} \vone_n^T) = \vzero_n^T,   
\end{equation*}
so $g_{\rm c}(\mL)$ is left untouched.

If some other $\mL$ is a permutation of $\overline \mL$, and $\mL_{:,i} = \overline \mL_{:,j}$, then 
\begin{equation*}
    \phi(\vu^T g_{\rm c}(\mL) - \vu^T g_{\rm c}({\overline \mL})_{:,j} \vone_n^T) = (\ve^{(i)})^T,
\end{equation*}
so $i$-th column of $g_{\rm c}(\mL)$ will turn to
\begin{equation*}
    g_{\rm c}(\mL)_{:,i} \mapsto
    g_{\rm c}(\mL)_{:,i} + ((\mA_{\overline \mL})_{:,j} - g_{\rm c}({\overline \mL})_{:,j})
    = g_{\rm c}(\mL)_{:,i} + ((\mA_\mL)_{:,i} - g_{\rm c}(\mL)_{:,i})
    = (\mA_\mL)_{:,i},
\end{equation*}
which is the desired output.
In conclusion, this layer maps the column $g_{\rm c}(\overline \mL)_{:,j}$ to $(\mA_{\overline \mL})_{:,j}$, without affecting any other columns.

As seen above, we need one layer per each unique value of $\vu^T g_{\rm c}(\mL)_{:,j}$ for each $\mL \in \wt \sG_{\delta}$. Note that there are $O(n(1/\delta)^{dn} / n!)$ such numbers, so we can use $O(n(1/\delta)^{dn} / n!)$ layers to finish our construction.


\section{Proof of Theorem~\ref{thm:anyseq2seq}}\label{sec:proof_anyseq2seq}
Proof of Theorem~\ref{thm:anyseq2seq} can be done in a similar way as Theorem~\ref{thm:univapprox}. As in the proof of Theorem~\ref{thm:univapprox}, there are three parts: Lemma~\ref{lemma:part1}, Proposition~\ref{prop:part2}, and Lemma~\ref{lemma:part3}. The statement and proof of Lemmas~\ref{lemma:part1} and \ref{lemma:part3} can be done in almost the same way, this time without permutation equivariance.

For the proof of the second part, which corresponds to Proposition~\ref{prop:part2}, we construct the network in a similar way.
Recall that we can assume without loss of generality that $\mX \in [0,1]^{d \times n}$. Choose
\begin{equation*}
    \mE = 
    \begin{bmatrix}
    0 & 1 & 2 & \cdots & n-1\\
    0 & 1 & 2 & \cdots & n-1\\
    \vdots & \vdots & \vdots & & \vdots\\
    0 & 1 & 2 & \cdots & n-1
    \end{bmatrix}.
\end{equation*}
Then, the first column of $\mX+\mE$ is in $[0,1]^d$, second is in $[1,2]^d$, and so on; this means that for all rows, the coordinates are monotonically increasing.
So we can use the same technique as the proof of Proposition~\ref{prop:part2} to divide the input values into cubes, quantize them to $\mL$, apply contextual mapping, and then value mapping. We describe each step in the following.


\subsection{Quantization by feed-forward layers}
In a similar way as Lemma~\ref{lemma:quantize}, the goal of this step is to quantize the input in $[0,1]^d \times [1,2]^d \times \dots \times [n-1,n]^d$ to its discrete version:
\begin{equation*}
[0:\delta:1-\delta]^d \times [1:\delta:2-\delta]^d \times \dots \times [n-1:\delta:n-\delta]^d.
\end{equation*}

This can be done by $dn/\delta$ feed-forward layers.
We add $dn/\delta$ layers of the following form, for $k = 0, \delta, \dots, n-\delta$ and $i = 1, \dots, d$:
\begin{equation*}
    \mZ \mapsto \mZ + \ve^{(i)} \phi( (\ve^{(i)})^T \mZ - k\delta \vone_n^T),
    ~~
    \phi(t) = 
    \begin{cases}
    0 & t < 0 \text{ or } t \geq \delta\\
    -t & 0 \leq t < \delta.
    \end{cases}
\end{equation*}
After $dn/\delta$ layers, any input entry of $\mX + \mE$ in $[k\delta, k\delta+\delta)$ is quantized to $k\delta$.


\subsection{Contextual mapping by attention layers}
By Step~1, we quantized any input $\mX + \mE$ to its quantized version. We call this quantized version $\mL$:
\begin{equation*}
\mL \in [0:\delta:1-\delta]^d \times [1:\delta:2-\delta]^d \times \dots \times [n-1:\delta:n-\delta]^d.
\end{equation*}
As done in Lemma~\ref{lemma:contextmap}, we define $\vu \defeq (1, \delta^{-1}, \dots, \delta^{-d+1})$ and $l_j \defeq \vu^T \mL_{:,j}$, for all $j \in [n]$.
Note that, because $\mL_{:,j} \in [j-1:\delta:j-\delta]^d$, we have
\begin{equation*}
    (j-1)(1+\delta^{-1}+\dots+\delta^{-d+1}) \leq l_j \leq (j-1)(1+\delta^{-1}+\dots+\delta^{-d+1}) + \delta^{-d+1} - \delta,
\end{equation*}
and $l_1 < l_2 < \dots < l_n$. Notice that this corresponds to the Category~1 in the proof of Lemma~\ref{lemma:contextmap}.

For simplicity of notation, let $s_j = (j-1) \sum_{k=0}^{d-1} \delta^{-k}$.
We stack $n(1/\delta)^d$ attention layers, with attention parts $\delta^{-d} \Psi(\cdot;l-\delta/2,l+\delta/2)$ for each $l \in \bigcup_{j=1}^n [s_j :\delta: s_j + \delta^{-d+1} - \delta]$, in increasing order of $l$.

These $n(1/\delta)^d$ attention layers perform selective shift operations on $l_j$'s, in increasing order of $j$. As seen in Appendix~\ref{sec:proof-category1}, shift operations result in $\wt l_1 < \wt l_2 <\dots<\wt l_n$. Also, the map from $\mL$ to $\wt l_n$ is one-to-one, which can be shown in the same way as Appendix~\ref{sec:proof-sublemma:1}. Since the range of $l_j$'s are a bit different, we have a different upper bound on $\wt l_n$:
\begin{align*}
    \wt l_n
    &\defeq l_n + \sum_{k=1}^{n-1} \delta^{-kd}(l_{n-k}-l_{n-k+1}) + \delta^{-nd} (l_n-l_1)\\
    &\leq l_n + \delta^{-d} (l_1 - l_n) + \delta^{-nd}(l_n - l_1)
    \leq s_n + \delta^{-d+1} - \delta + (\delta^{-nd} - \delta^{-d}) (s_n + \delta^{-d+1} - \delta)\\
    &=(\delta^{-nd} - \delta^{-d} + 1) \left ( (n-1) \frac{\delta^{-d}-1}{\delta^{-1}-1} + \delta^{-d+1} - \delta  \right )\\
    &\leq (\delta^{-nd} - \delta^{-d} + 1) (\delta^{-d} - 1) (n-1+\delta)
    < n\delta^{-(n+1)d}.
\end{align*}

Finally, we add an extra single-head attention layer with attention part $n\delta^{-(n+1)d-1} \psi(\cdot;0)$. We define the output of this layer as $g_{\rm c}(\mL)$. In a similar way as Appendix~\ref{sec:proof-category1}, this layer shifts all the layers by $n \delta^{-(n+1)d-1} \wt l_n$, thus making the intervals corresponding to different values of $\wt l_n$ disjoint from each other. This ensures that different contexts $\mL$ are mapped to distinct numbers in $\vu^T g_{\rm c}(\mL)$, thus implementing a contextual mapping.


\subsection{Function value mapping by feed-forward layers}
Now, it is left to map $g_{\rm c}(\mL)$ to the desired output.
As seen in the last step, each different context $\mL$ maps to $n$ unique numbers $\vu^T g_{\rm c}(\mL)$, which are at least $\delta$ apart from each other. The value mapping step can be done in a similar way as Lemma~\ref{lemma:memorize}. 
The construction now requires $O(n(1/\delta)^{dn})$ layers because there is no permutation equivariance.

\section{Experimental setup}\label{sec:appx_experiments}

For our experiments we follow the same setting as in BERT \citep{devlin2018bert}. We first pre-train the models on the masked language modeling task and the next sentence prediction task. We use English Wikipedia corpus and BooksCorpus dataset \citep{zhu2015aligning} for this pre-training. We use $\BB$, a 12 layer Transformer model as the baseline. This model uses an embedding size of 768 and has 12 head self-attention layers and 3072 wide feed forward layers. We train it with the Adam optimizer, with $.01$ dropout and weight decay. We do pre-training for 250k steps with a batch size of 1024 and a max sequence length of 512. Pre-training takes around 2 days on 16 TPUv3 chips. We take the pre-train models and finetune them on the MNLI and SQuAD datasets separately using the same hyper-parameters as in \citet{devlin2018bert}. MNLI is a sentence entailment task in which, given a premise sentence, requires us to classify a hypothesis sentence into neutral, contradiction or entailment classes. We report the classification accuracy on this task. SQuAD is a question answering task, in which given a paragraph and a question, requires us to identify the answer as a span of the words in the paragraph. For this task we report both the F1 score and the Exact Match (EM) percentage. The metrics are reported on the dev sets of these datasets.

For our experiments with the depth-wise separable convolution layers, we follow the implementation in \citep{wu2019pay}. We first use a GLU layer followed by the convolution layer. We use 16 separable convolution filters, of filter length 128, and reuse them, with each filter operating on 48 of the 768 dimensions of the input. This layer also has a skip connection and the output is normalized using layer normalization, similar to the self-attention layer. In our experiments, we replace the self-attention layers of the Transformers, in the lower layers, with this convolution layer. We keep the feed forward layer of the Transformer block the same.

For the experiments performed in this paper, one might consider an alternate explanation that the tasks considered maybe are easy, and do not require any advanced architecture to solve them, and even a simple architecture (bi-linear projection or separable convolution) might solve these tasks. To rule out this case we consider an even simpler architecture, namely average attention, as a baseline for our experiments.

\noindent{\bf Average attention.}~An average attention layer replaces the self-attention layer, and just computes the average of projections of all the other tokens. That is, we replace $\sfmx [(\mW_K^i \mX)^T \mW_Q^i \mX]$ in \eqref{eq:attn} with a matrix full of $1/n$. The model still has the skip connections and the feed-forward layers like Transformer. 

\begin{table}[!t]
\centering
\begin{tabular}{|c| c c c c|} 
 \hline
 Architecture & Average Attention & {\rm BProj} & {\rm SepConv} & Transformer \\ [0.5ex] 
 \hline
  \# params & 88.3M & 90M  & 102.5M & 110M \\
 \hline
 Masked LM accuracy (\%)& 28 & 59 & 60 & 63 \\
 \hline
 MNLI accuracy (\%)& 66 & 72.3 & 73 & 78.2 \\
 \hline
\end{tabular}
\vspace{1ex}
\caption{Performance of bi-linear projection and separable convolution layers on masked LM pre-training task and MNLI. Note that we expect these computationally cheaper models to have lower performance than the expensive Transformers as they do not compute input dependent attention weights and have weaker representation power. Our goal in studying them is to see if they can substitute some of the expensive attention layers for computing the contextual mappings. These models are trained in a large batch setting, with a batch size of 8192 for 60k steps, unlike the other set of experiments reported in Fig.~\ref{fig:conv}. Note that average attention has clearly worse performance, showing that theses tasks indeed require an advanced architecture.}
\label{table:large_batch}
\end{table}
\end{document}